\documentclass[11pt]{article}
\usepackage{style}
\usepackage[left=1in,bottom=1in,top=1in,right=1in]{geometry}
\hypersetup{%
	colorlinks   = true, %
	urlcolor     = blue, %
	linkcolor    = blue, %
	citecolor    = blue,  %
}

\graphicspath{{img/}}

\title{A Characterization of List Language Identification in the Limit}

\author{Moses Charikar\thanks{Stanford University. Email: \texttt{moses@cs.stanford.edu.}}
\and
Chirag Pabbaraju\thanks{Stanford University. Email: \texttt{cpabbara@cs.stanford.edu.}}
\and
Ambuj Tewari\thanks{University of Michigan, Ann Arbor. Email: \texttt{tewaria@umich.edu.}}
}

\date{\today}

\usepackage[dvipsnames]{xcolor}

\begin{document}

\maketitle

\thispagestyle{empty}

\begin{abstract}

We study the problem of language identification in the limit, where given a sequence of examples from a target language, the goal of the learner is to output a sequence of guesses for the target language such that all the guesses beyond some finite time are correct.
Classical results of Gold showed that language identification in the limit is impossible for essentially any interesting collection of languages.
Later, Angluin gave a precise characterization of language collections for which language identification is possible.
Motivated by recent positive results for the related problem of language generation, we revisit the classic language identification problem in the setting where the learner is given the additional power of producing a list of $k$ guesses at each time step.
The goal is to ensure that beyond some finite time, one of the guesses is correct at each time step.
Such list learning versions of several basic learning problems have been widely studied.

We give an exact characterization of collections of languages that can be $k$-list identified in the limit, based on a recursive version of Angluin's characterization (for language identification with a list of size $1$).
This further leads to a conceptually appealing characterization: A language collection can be $k$-list identified in the limit if and only if the collection can be decomposed into $k$ collections of languages, each of which can be identified in the limit (with a list of size $1$).
We also use our characterization to establish rates for list identification in the statistical setting where the input is drawn as an i.i.d.~stream from a distribution supported on some language in the collection.
Our results show that if a collection is $k$-list identifiable in the limit, then the collection can be $k$-list identified at an exponential rate, and this is best possible.
On the other hand, if a collection is not $k$-list identifiable in the limit, then it cannot be $k$-list identified at any rate that goes to zero.

\end{abstract}

\newpage

\section{Introduction}
\label{sec:intro}

Modeling language learning is a complex and multi-faceted task. This paper is concerned with the model of \textit{language identification in the limit} proposed by Gold in his seminal work \citep{gold1967language}. One of Gold's motivations for considering such a model was to formalize the process of language acquisition in children. Viewed in a certain way, 
the language learning environment for children comprises of people around them constantly uttering valid sentences from a language, with few instances of negative examples. In this environment, a child supposedly infers a valid grammatical representation of the language, as the number of positive utterances increases. Moreover, linguistic studies indicate that children are rarely given feedback when they utter invalid sentences \citep{brown_derivational_2004}, or are not fully receptive of it when they are \citep{wiki_negative_acquisition,mcneill1970acquisition}. Thus, the learning environment would appear to comprise of a corpus of largely positive examples from the target language of interest, from which the child is supposed to infer a valid grammar for the language.

The formal setup that Gold proposes to model this is as follows: there is a countable collection $\mcC=(L_1,L_2,\dots)$ of languages, of which a certain language $L_z$ is the target language of interest. The sentences, or ``strings'' in this language are enumerated to a learner in an online fashion, and in an arbitrary order. Concretely, the learner is provided a sequence $x_1,x_2,\dots$ as input, and at every time step $t$, the learner guesses an index $i_t$ based on the input it has seen so far. The learner is said to have identified the language $L_z$ in the limit if beyond some finite time, all of its guesses $i_t$ satisfy that $L_{i_t}=L_z$. If we think of the collection $\mcC$ as comprising of different \textit{grammars} for languages, then the criterion of identification in the limit is ensuring that the learner eventually acquires a valid grammar for the target language. We note that the input does not contain any negative examples that are outside the target language, and furthermore, the learner is never given any feedback about its guess being correct or not. 

The results established by Gold for when language identification in the limit is possible in this model are drastically negative. Gold showed that essentially any interesting collection of languages that contains an infinite language cannot be identified in the limit. In particular, this rules out most formal language families of interest like regular languages, context-free languages, etc. Later follow-up work by Angluin \citep{angluin1979finding,angluin1980inductive} established a precise characterization of collections that may be identified in the limit. Namely, Angluin identified a strict combinatorial condition that must be satisfied by a collection for it to be identifiable in the limit, giving justification for why most formal language collections are not identifiable. %

In contrast to these classical negative results, the recent work of Kleinberg and Mullainathan \citep{kleinberg2024language} establishes surprisingly general positive results for a slightly different task of language \textit{generation} in the limit. In this model, instead of making a guess $i_t$ of the identity of the target language at every time step, the learner is tasked with generating a string $z_t$. The learner successfully generates in the limit, if beyond some finite time, the string $z_t$ generated by the learner belongs to the target language $L_z$, and is not one of the input strings $x_1,\dots,x_t$ it has seen so far. \cite{kleinberg2024language} showed that this modified success criterion can be achieved for every countable collection of languages! Such a strong positive result for generation in the limit motivates revisiting Gold's model for language identification in the limit, and asking: Could the strong negative results in the model be circumvented if we consider slightly more \textit{relaxed notions} of identification in the limit? 

Motivated by this question, in this work, we consider the setting of language identification in the limit, where instead of making a single guess at every time step, the learner may make a \textit{short list} of at most $k$ guesses. We will deem the learner to have successfully $k$-list identified the target language $L_z$ in the limit, if beyond some finite time, every list of guesses output by the learner \textit{contains} an index $i$ such that $L_i=L_z$. This is a natural way to impose a slightly weaker desideratum on the learner: of its $k$ guesses, we merely require that at least one is correct. In fact, this formulation of ``learning with a list'' has received considerable attention in the literature for various natural learning problems (see \Cref{sec:related-work}).

\subsection{Main Results}
\label{sec:results}

As noted above, Angluin \citep{angluin1980inductive} provided a precise characterization of language collections that are identifiable in the limit. This characterization requires every language in the collection to have a finite \textit{tell-tale} set which distinguishes it from every other language that is a proper subset of it. As our first main contribution, we present a tell-tale condition (\Cref{sec:condition}) which we term the \textit{$k$-Angluin condition}, that precisely characterizes $k$-list identification in the limit. The condition stipulates the existence of tell-tales for every language that satisfy a $k$-level recursive predicate. We show that this condition is both sufficient and necessary for $k$-list identification. Our characterization thus significantly generalizes Angluin's tell-tale-style characterization for identification in the limit, which corresponds to the special case of $k=1$, to the setting of $k$-list identification for any $k \ge 1$.

\begin{theorem}[Characterization of $k$-list Identification]
    \label{thm:k-list-identification-characterization}
    A countable collection $\mcC=(L_1,L_2,\dots)$ of languages can be identified in the limit with a list of size $k$ if and only if it satisfies the $k$-Angluin condition.
\end{theorem}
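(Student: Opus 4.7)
The proof splits into two implications, and in both directions I would induct on $k$, using Angluin's classical characterization as the base case $k=1$. For this plan I assume the $k$-Angluin condition takes the natural recursive shape: every $L_i \in \mcC$ admits a finite tell-tale $T_i \subseteq L_i$ such that the sub-collection $\{L_j \in \mcC : T_i \subseteq L_j,\ L_j \subsetneq L_i\}$ satisfies the $(k-1)$-Angluin condition, with the $0$-Angluin condition meaning the sub-collection is empty.

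\emph{Sufficiency.} Given the $k$-Angluin condition, I would construct a learner that at time $t$, with input $S_t = \{x_1,\dots,x_t\}$, computes a ``leading candidate'' $L_{i^\star}$ via the Angluin-style rule: the smallest-indexed $L_i$ consistent with $S_t$ whose tell-tale $T_i$ has already been seen ($T_i \subseteq S_t$). This occupies one of the $k$ list slots. For the remaining $k-1$ slots, the learner recursively runs a $(k-1)$-list identifier on the sub-collection $\{L_j \in \mcC : T_{i^\star} \subseteq L_j,\ L_j \subsetneq L_{i^\star}\}$, which by hypothesis satisfies the $(k-1)$-Angluin condition. Beyond the finite time at which $T_z$ is fully enumerated for the target $L_z$, an induction shows that either $L_z$ becomes the eventual leading candidate or $L_z$ lies in the recursively handled sub-collection, so one of the $k$ slots stabilizes to $L_z$. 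An equivalent and perhaps cleaner route goes through the decomposition characterization advertised in the introduction: show that $k$-Angluin implies $\mcC$ splits into $k$ classically identifiable sub-collections, and then run $k$ parallel Angluin identifiers in lock-step.

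\emph{Necessity.} For the converse I would argue contrapositively. Suppose $\mcC$ fails the $k$-Angluin condition, witnessed by some $L_z$ with no suitable tell-tale. Against an arbitrary $k$-list learner I would construct an adversarial input stream by a staged diagonalization: start enumerating $L_z$, and whenever the learner ``commits'' a list slot to $L_z$, invoke the failure to locate a proper sublanguage $L' \subsetneq L_z$ containing the current finite prefix such that the relevant sub-collection again violates the $(k-1)$-Angluin condition. The inductive hypothesis then supplies a sub-adversary that defeats the remaining $k-1$ slots on a genuine language of $\mcC$ sitting inside $L'$, while the very switch from $L_z$ to $L'$ defeats the slot the learner had committed to $L_z$.

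\emph{Main obstacle.} The substantive difficulty is in the necessity direction. A one-shot diagonalization invalidates one slot at a time, whereas here we must defeat $k$ list slots simultaneously on a single monotone enumeration of one actual language of $\mcC$. Two accounting conditions must be preserved: the limit of the adversary's enumeration must be a bona fide $L \in \mcC$ (not merely a chain of growing finite sets), and the learner must be prevented from ``reusing'' a single slot to cover two distinct adversarial branches. The recursive layering of the $k$-Angluin condition is precisely what gives the adversary the freedom to make $k$ sequentially nested switches, and framing the induction so that the level-$(k-1)$ failure delivers a ready-to-use sub-adversary inside the ``active'' sublanguage is the most delicate part of the argument.
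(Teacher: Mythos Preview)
Your sufficiency direction matches the paper's approach closely: the paper's \texttt{ListIdentify} algorithm is exactly the recursive scheme you describe, and the stratification route you mention is indeed an alternative the paper records.

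The necessity direction, however, has a real gap. Your plan is to induct on $k$: once the learner places $L_z$ in its list, switch to a sublanguage $L'$ on which the $(k-1)$-Angluin condition fails, and invoke the inductive hypothesis to obtain a sub-adversary that defeats the ``remaining $k-1$ slots'' on some $L'' \subseteq L'$. The problem is that there are no ``remaining $k-1$ slots.'' The learner is not obligated to keep $L_z$ in its list after the switch; it can immediately drop $L_z$ and devote all $k$ slots to the sub-collection inside $L'$. Your inductive hypothesis only furnishes an adversary against a $(k-1)$-list learner, and there is no clean way to manufacture one from the given $k$-list learner, because the allocation of slots is dynamic and unlabeled. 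You correctly flag this in your ``Main obstacle'' paragraph, but the proposed resolution does not actually overcome it: a single switch invalidates one slot at one time step, not permanently.

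The paper does \emph{not} argue by induction on $k$. Instead it builds one global adversary (Algorithm~2, \texttt{AdvEnum}) that maintains a chain $L_{i_1} \supsetneq \cdots \supsetneq L_{i_\ell}$ with $\neg\Psi(L_{i_j}, k-j+1)$ for each $j$, enumerates the bottom language $L_{i_\ell}$ until the learner's list contains it, then inspects the entire list: if every chain element is covered (forcing $\ell \le k$), it extends the chain by one more level using the failure of $\Psi(L_{i_\ell}, k-\ell+1)$; if some $L_{i_j}$ is missing from the list, it truncates the chain back to that $j$ and resumes enumerating $L_{i_j}$. The language on which the learner is ultimately fooled is only determined in the limit, as the chain level $\ell^\star = \liminf_n |\text{chain}_n|$. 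This backtracking mechanism is precisely what lets the adversary force the learner to simultaneously cover all chain levels with only $k$ slots, and it does not decompose into an inductive step plus a one-shot switch.
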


We note that our characterization above also shows that for any $k$, there exist language collections that are $k$-list identifiable in the limit, but are not $(k-1)$-list identifiable. Furthermore, there also exist countable language collections that are not $k$-list identifiable for \textit{any} finite value of $k$ (see \Cref{remark:canonical-collections,remark:separation-between-list-sizes})!

Our next result is about a structural property of $k$-list identifiable collections. As guaranteed by \Cref{thm:k-list-identification-characterization}, any $k$-list identifiable collection must satisfy the $k$-Angluin condition. As it turns out, the recursive form of this condition allows us to decompose any $k$-list identifiable collection into a union of $k$ collections that are each identifiable in the limit (in the standard $k=1$ sense)!

\begin{restatable}[Stratification of $k$-list Identifiable Collections]{theorem}{thmstratification}
    \label{thm:stratification}
    Let $\mcC=(L_1,L_2,\dots)$ be a countable language collection. Then, $\mcC$ is $k$-list identifiable in the limit if and only if $\mcC=\cup_{i=1}^k \mcC_i$ where $\mcC_i$ is identifiable in the limit for every $i\in\{1,2,\dots,k\}$.
\end{restatable}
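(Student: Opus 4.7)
The plan is to prove both directions of the equivalence. The reverse direction is a direct parallel simulation; the forward direction uses \Cref{thm:k-list-identification-characterization} together with a structural decomposition driven by the recursive tell-tale condition.

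\textbf{Reverse direction.} Suppose $\mcC = \cup_{i=1}^k \mcC_i$ with each $\mcC_i$ identifiable in the limit via identifier $A_i$. I would build a $k$-list identifier that at each time step simulates all of $A_1,\dots,A_k$ on the input and outputs the list of their current guesses. For any target $L_z\in\mcC$ with $L_z\in\mcC_{i^\star}$, the identifier $A_{i^\star}$ eventually converges to an index of $L_z$, so the $i^\star$-th slot of the output list is eventually correct.

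\textbf{Forward direction.} By \Cref{thm:k-list-identification-characterization}, $\mcC$ satisfies the $k$-Angluin condition. I would recursively define $\mcC^{(0)}=\emptyset$ and, for $j\geq 1$,
\[
\mcC^{(j)} \;=\; \bigl\{L\in\mcC : \exists \text{ finite } T\subseteq L \text{ with } \{L'\in\mcC : T\subseteq L'\subsetneq L\}\subseteq\mcC^{(j-1)}\bigr\},
\]
and then set $\mcC_j := \mcC^{(j)}\setminus\mcC^{(j-1)}$. If $\mcC^{(k)}=\mcC$ this yields the desired partition, and each $\mcC_j$ is identifiable because the tell-tale $T$ witnessing $L\in\mcC^{(j)}$ also serves as an Angluin tell-tale for $L$ within $\mcC_j$: any $L'\in\mcC_j$ with $T\subseteq L'\subsetneq L$ would force $L'\in\mcC^{(j-1)}$, contradicting $\mcC_j\cap\mcC^{(j-1)}=\emptyset$.

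The crux is thus to prove $\mcC^{(k)}=\mcC$ by induction on $k$, and the main obstacle lies in relating the recursive structure of $k$-Angluin (which quantifies over sub-collections treated as stand-alone collections) to the construction of $\mcC^{(j)}$ (which refers to sub-collections inside $\mcC$). To bridge this, I would establish a \emph{tell-tale augmentation} lemma: for any ``down-set'' $\mcD=\{L'\in\mcC : T^\star\subseteq L'\subsetneq L_0\}$ with $L_0\in\mcC$ and $T^\star$ finite, the analogous sequence $\mcD^{(j)}$ defined using sub-collections within $\mcD$ satisfies $\mcD^{(j)}\subseteq\mcC^{(j)}$ for every $j$. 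The proof is by induction on $j$: for $L\in\mcD^{(j)}$ with witness tell-tale $T_L$, the augmented tell-tale $T_L\cup T^\star$ has the property that any $L'\in\mcC$ with $T_L\cup T^\star\subseteq L'\subsetneq L$ automatically belongs to $\mcD$, so the $\mcC$-sub-collection coincides with the $\mcD$-sub-collection, and the latter lies in $\mcD^{(j-1)}\subseteq\mcC^{(j-1)}$ by the inner inductive hypothesis. With this lemma, the inductive step for $\mcC^{(k)}=\mcC$ proceeds cleanly: for $L\in\mcC$, $k$-Angluin supplies $\tau(L)$ making $S_L:=\{L'\in\mcC : \tau(L)\subseteq L'\subsetneq L\}$ a $(k-1)$-Angluin collection; the outer inductive hypothesis yields $S_L=(S_L)^{(k-1)}$, and augmentation yields $(S_L)^{(k-1)}\subseteq\mcC^{(k-1)}$, certifying $L\in\mcC^{(k)}$ via the tell-tale $\tau(L)$.
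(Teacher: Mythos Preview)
Your proof is correct but takes a genuinely different route from the paper, and it also carries an unnecessary detour.

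The paper's forward direction peels \emph{top-down}: it sets $\mcC_k$ to be the languages with no incoming edge in the relation $R_k=\{(i,j): L_j\subsetneq L_i,\ T^{(k)}_i\subseteq L_j\}$, argues that every remaining language satisfies $\Psi(\cdot,k-1)$, and iterates down to level $1$. Your construction is \emph{bottom-up}: your $\mcC^{(j)}$ collects the languages of ``recursion depth at most $j$,'' and you take $\mcC_j=\mcC^{(j)}\setminus\mcC^{(j-1)}$. Both stratifications are valid, and both hinge on the same mechanism---the witnessing tell-tale separates a language from all its proper subsets landing in the same stratum---though the two partitions are in general different.

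That said, your augmentation lemma and the outer induction over collections are unnecessary. In the paper's $k$-Angluin condition, the predicate $\Psi(\cdot,j)$ is defined \emph{globally with respect to the fixed collection $\mcC$} at every level of the recursion; it never restricts to a standalone sub-collection. Comparing your definition of $\mcC^{(j)}$ to \eqref{eqn:base-predicate}--\eqref{eqn:inductive-predicate} gives, by a one-line induction on $j$, that $\mcC^{(j)}=\{L\in\mcC:\Psi(L,j)\}$. Hence $\mcC^{(k)}=\mcC$ is literally the statement that $\mcC$ satisfies the $k$-Angluin condition, which you already have from \Cref{thm:k-list-identification-characterization}. The ``main obstacle'' you flag---reconciling standalone sub-collections with sub-collections inside $\mcC$---does not actually arise.
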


It is instructive to contrast the two results above with standard binary classification in machine learning. Namely, in the statistical learning setting of binary classification, the ability to output a list of $k$ binary hypotheses from a hypothesis class, with the objective that one of them has low prediction error on a new test point, does not allow a learner to learn hypothesis classes that it cannot already learn with a \textit{single} hypothesis (and these are precisely those classes that have finite \textit{VC dimension} \citep{vapnik1971uniform,ehrenfeucht1989general}). Namely, if a learner outputs $k$ hypotheses of which one has low error, then the learner may as well output the single hypothesis out of these $k$ hypotheses that minimizes error on some additional training data. In contrast, in the context of identification in the limit where the language collection plays the role of a hypothesis class, the characterization in \Cref{thm:k-list-identification-characterization} shows that the ability to output a list of $k$ languages from the collection is \textit{provably more powerful} than the ability to output only a single language. 

Similarly, it is known that the union of $k$ binary hypothesis classes, each of which is statistically learnable, results in a binary hypothesis class that is also statistically learnable\footnote{Online learnability, as governed by the \textit{Littlestone dimension} \citep{littlestone1988learning,ben2009agnostic}, also satisfies such a union closure property.} (e.g., see \citep{ghazi2021near}). However, consider any collection $\mcC$ that is $k$-list identifiable, but not identifiable in the limit. \Cref{thm:stratification} ensures that such a collection may always be broken down into $k$ collections all of which are individually identifiable, but their union is \textit{not} identifiable in the limit!

We now turn towards somewhat of a more statistical setting for language identification in the limit, where the sequence of strings input to the learner is an \textit{infinite i.i.d.\ sequence} of strings drawn from a distribution supported on the target language, instead of being an arbitrary worst-case sequence. In this setting, one cares about the probability that a learner's sequence of guesses converges to the the identity of the target language. This setting was first considered by Angluin in \citep{angluin1988identifying}. More recently, it was studied by \cite{kalavasis2025limits} in the context of determining precise \textit{finite-sample rates} of identification. Namely, with any learner, one can associate a \textit{rate function}, which maps any $t$ to the probability of the learner incorrectly guessing the identity of the target language, upon seeing $t$ i.i.d.\ examples from a distribution supported on the target language.
The results of \citep{kalavasis2025limits} show that any collection that can be identified in the limit, can also be identified at an \textit{exponential rate} in the statistical setting, and this is effectively the best rate possible. Furthermore, a collection that is not identifiable in the limit does not admit \textit{any vanishing rate}.

We can also consider such a statistical setting for list identification, where the rate function measures the probability that the list output upon seeing $t$ i.i.d.\ examples does not contain the identity of the target language. In this setting, we show:

\begin{theorem}[Statistical Rates for $k$-list Identification]
    \label{thm:rates}
    Let $\mcC=(L_1,L_2,\dots)$ be any countable language collection. If $\mcC$ is $k$-list identifiable in the limit, then $\mcC$ can be $k$-list identified at an exponential rate, and this is the best rate possible. Furthermore, if $\mcC$ is not $k$-list identifiable in the limit, then $\mcC$ cannot be $k$-list identified at any rate that goes to zero.
\end{theorem}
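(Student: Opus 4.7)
The theorem decomposes into three assertions; I address them in turn.

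First, for the positive direction I invoke the stratification theorem (\Cref{thm:stratification}) to write $\mcC = \bigcup_{i=1}^k \mcC_i$ with each $\mcC_i$ identifiable in the limit. The $k=1$ result of \citep{kalavasis2025limits} then yields a statistical learner $A_i$ for each $\mcC_i$ achieving rate $\exp(-\Omega(t))$. The composite $k$-list learner runs $A_1,\ldots,A_k$ in parallel on the input stream and outputs their $k$ individual guesses as the list. If the true target is $L_z$ and $L_z \in \mcC_{i^\star}$ (which must hold for some $i^\star$ since $\mcC = \bigcup_i \mcC_i$), then the $i^\star$-th coordinate of the list equals the index of $L_z$ with probability $1-\exp(-\Omega(t))$; hence the list itself contains $L_z$ at an exponential rate.

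Second, for tightness of the exponential rate I would adapt the $k=1$ lower bound of \citep{kalavasis2025limits} to a small $k$-list identifiable sub-collection where any correct list must simultaneously hedge among $k+1$ candidate languages whose pairwise support differences have vanishing probability under the hard distribution. A standard two-distribution testing argument then forces any $k$-list rate to be $\exp(-O(t))$.

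Third, for the no-vanishing-rate direction I argue by contrapositive: a statistical $k$-list learner $A$ on $\mcC$ with any vanishing rate $\epsilon(t)\to 0$ would yield an adversarial $k$-list learner for $\mcC$, contradicting the assumption that $\mcC$ is not $k$-list identifiable (via \Cref{thm:k-list-identification-characterization}). The intended reduction takes an adversarial prefix $x_1,\ldots,x_t$, forms the uniform empirical distribution $\hat D_t$ on the seen elements, feeds many i.i.d.~samples from $\hat D_t$ into $A$, and aggregates the outputs (for instance, by picking the $k$ indices most frequently appearing in $A$'s output lists). One then shows that for any target $L_z$ and any adversarial enumeration of a support of $L_z$, the aggregated lists eventually contain an index for $L_z$ at every step.

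The main obstacle is precisely this last reduction: transferring a statistical guarantee (against i.i.d.\ samples from some distribution) to an adversarial guarantee (against worst-case enumerations) requires care, because the adversarial input may never look i.i.d.\ and the guarantee of $A$ must be lifted through a Borel--Cantelli-style argument over the evolving empirical distributions $\hat D_t$. I expect the cleanest route to be an induction on $k$, using the $k=1$ version from \citep{kalavasis2025limits} as the base case; at the inductive step, the failure of the $k$-Angluin condition produces a witness language $L \in \mcC$ such that for every finite $T \subset L$ the sub-collection $\{L' \in \mcC : T \subset L' \subsetneq L\}$ fails to be $(k-1)$-list identifiable, and so (by the inductive hypothesis) admits no vanishing $(k-1)$-list rate. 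The crux will be to lift this recursive obstruction to an obstruction for any putative vanishing $k$-list rate on $\mcC$ itself, by showing that any vanishing-rate $k$-list learner on $\mcC$ would have to internally resolve a $(k-1)$-list problem on the sub-collection at a vanishing rate.
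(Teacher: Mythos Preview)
Your first two parts are essentially correct and align with the paper. The exponential upper bound is proved exactly as you propose: stratify via \Cref{thm:stratification} and concatenate the $k=1$ exponential-rate identifiers from \citep{kalavasis2025limits}. For the lower bound the paper's argument is more elementary than a two-distribution test: it takes $k+1$ distinct languages sharing a string $x$, puts mass $\ge 1/2$ on $x$ in each of $k+1$ valid distributions, and observes that with probability $\ge 2^{-t}$ the whole sample is $(x,\ldots,x)$, so any $k$-list misses at least one of the $k+1$ languages; pigeonhole over $t$ finishes. (You omit the non-triviality caveat that such a $(k+1)$-tuple must exist.)

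The genuine gap is in your no-rate direction. Your empirical-distribution reduction does not work: $\hat D_t$ is supported on a \emph{finite} subset of $L_z$, so it is not a valid distribution for $L_z$ (it may even be valid for some proper sublanguage in $\mcC$, or for no language at all), and the statistical guarantee on $A$ says nothing about its behavior on samples from $\hat D_t$. Your inductive fallback also lacks a mechanism for the step you yourself flag as the crux. The paper instead routes the reduction through \emph{probabilistic} list identification, extending results of Pitt and Angluin to the list setting: (i) deterministic $k$-list identification is shown equivalent to probabilistic $k$-list identification with success probability $> \tfrac{k}{k+1}$, via a $\topk$ aggregation over the leaves of the randomized identifier's computation tree; (ii) probabilistic $k$-list identification with probability $p$ is shown equivalent to $k$-list identification on infinite i.i.d.\ draws with probability $p$, the key trick being that an adversarial enumeration $\sigma$ of $L_z$ induces a \emph{valid} distribution $\mcD_\sigma$ by placing mass $2^{-i}$ on the $i$-th term of $\sigma$; combining (i)--(ii) with \Cref{thm:k-list-identification-characterization}, any learner must fail on infinite draws with probability $\ge \tfrac{1}{k+1}$ for some valid distribution. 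Finally, a learner with any vanishing rate is boosted---by partitioning the $t$ samples into $t/\log t$ independent batches, running the learner on each, and taking $\topk$ of the resulting lists---so that Borel--Cantelli forces success on infinite draws with probability $1$, a contradiction. The threshold $\tfrac{k}{k+1}$ and the $\topk$ aggregation are the list-specific ingredients absent from your proposal.
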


Our result above provides a complete characterization of finite-sample statistical rates possible for $k$-list identification, analogous to the results of \cite{kalavasis2025limits} for (vanilla) identification.

\subsection{Other Related Work}
\label{sec:related-work}

\paragraph{Language Identification in the Limit.} The study of \textit{inductive inference}, starting from the works of \cite{gold1967language} and \cite{blum1975toward}, has a rich history. Various identification criteria under different names (like \textbf{EX}, \textbf{BC}, \textbf{Fex}, etc.) have been established over the years (see \cite{case1983comparison,osherson1986systems,case1999power,baliga1999synthesis}), of which \textbf{EX} (Explanatory) and \textbf{BC} (Behaviorally Correct) are arguably the two most basic notions. %
At a high level, the \textbf{EX} criterion requires the stronger guarantee that the sequence of guesses converges to the \textit{same} guess (which is also the correct answer)
beyond a finite time on every input sequence. On the other hand, the \textbf{BC} criterion does not require the sequence of guesses itself to converge to a single guess, but only requires that every guess beyond a finite time be a correct guess (since there could be multiple objects in the collection at different indices that all represent the target object). In this regard, the identification criterion that we are concerned with in this work (and the one described in the introduction above) is the \textbf{BC} criterion.

The most general setup for inductive inference involves a \textit{complete presentation} of the graph of a function $f:\N\to\N$ as input, which is any infinite sequence of pairs $(x_i,f(x_i))_{i \in \N}$, such that for every $j \in \N$, $x_i=j$ for some $i$. Notably, the function's values on the entire domain are eventually revealed in the input. This is \textit{unlike} the case in language identification in the limit, where the input only comprises of \textit{positive presentations}; namely, if we associate every language $L$ in the collection with its indicator function $f_L:x \mapsto \Ind[x \in L]$ over the universe, then the input in the model of language identification can be seen as only comprising of those $(x,f(x))$ pairs for which $f(x)=1$. We will distinguish these two paradigms in the study of inductive inference explicitly by referring to them as ``function identification" and ``language identification''.

The works most directly relevant to our setting are those on \textit{identification with a team}, introduced by \cite{smith1982power} for function identification, and extended to language identification in the works of \cite{jain1990language,jain1995aggregating,jain1995team,jain1996computational,jain1996team,jain2000team}. In particular, the notion of $\textbf{Team}^{\mathbf{m}}_{\mathbf{n}}\textbf{ID}$ identification considered in these works is defined as follows: of $n$ machines working to identify the target function/language, at least $m$ of them \textbf{ID}-identify the target language, where \textbf{ID} could be one of various identification criteria of interest (like \textbf{EX}, \textbf{BC}, \textbf{TxtEX}, \textbf{TxtBC}, etc.). Translating to these terms, our criterion for $k$-list identification would be equivalent to the notion of $\textbf{Team}^{\mathbf{1}}_{\mathbf{k}}\textbf{TxtBC}$ identification considered in the latter set of works, provided that the machines are also allowed to communicate with each other. 
More importantly though, we note that the primary focus in these works was to: 1) study and establish \textit{separations} between collections that can be $\textbf{Team}^{\mathbf{m}}_{\mathbf{n}}\textbf{ID}$ identified, but not $\textbf{Team}^{\mathbf{m'}}_{\mathbf{n'}}\textbf{ID}$ identified for $(m,n) \neq (m',n')$, and 2) study connections of team identification with the notion of \textit{probabilistic inductive inference} \citep{pitt_thesis,pitt1989probabilistic}. In contrast, our primary focus in this work is on establishing a \textit{clean} and \textit{exact} Angluin-style tell-tale characterization of $k$-list identification in the limit, as well as to establish precise finite-sample rates for the same. To the best of our knowledge, prior to our work, such an exact characterization for $k$-list identification has not been previously established.

\paragraph{Language Generation in the Limit.}

The inductive inference paradigm has recently regained a lot of attention, largely due to the milestone work of \cite{kleinberg2024language} on language generation in the limit. Since then, there have been a number of follow-ups on this topic in a very short span of time \citep{kalavasis2025limits,li2024generation,charikar2024exploring,kalavasis2024characterizations,papazov2025learning,LangGenCOLT2025}. As mentioned earlier, our motivation to revisit relaxed notions of identification is inspired from the overwhelmingly positive results possible in generation.

\paragraph{List Learning.}
List prediction, or ``list-decodable learning'' \citep{balcan2008discriminative}, where the objective is to output a short list of predictions with the goal that at least one of the outputs in the list is accurate, is also a widely studied topic, particularly in settings where data corruptions abound, or when the statistical/computational nature of the problem is pathological in the worst case. Among many works, this includes the extensive literature on list decoding in coding theory \citep{guruswami2007listdecoding}, works on robust mean estimation \citep{charikar2017learning,diakonikolas2018list}, list classification \citep{charikar2023characterization,moran2023list} as well as list regression \citep{karmalkar2019list,raghavendra2020list,pabbaraju2025characterization}. %

\section{Overview of Techniques}
\label{sec:overview}

We will now give a detailed overview of the various techniques used in establishing our results.

\paragraph{Characterization of $k$-list Identification.}

For the purposes of this overview, we will restrict ourselves to the case of $2$-list identification, and work with canonical language collections that capture the crux of our characterizing condition, as well as our upper and lower bounds. Consider first the following language collection:

\begin{example}
    \label{example:Z-i}
    Let $\mcC$ be the language collection whose constituent languages are $\Z$, as well as $\Z \setminus \{i\}$ for every $i \in \Z$.
\end{example}

The collection in \Cref{example:Z-i} is not identifiable in the limit (with a single guess), and it is helpful to see the direct diagonalization argument for this. Here, we will use the term ``identifier'' to denote a learner that seeks to identify in the limit. Fix any enumeration of the language $\Z$ (e.g., $0,-1,1,-2,2,\dots$), and fix any $n_1 \in \Z$. %
Consider an adversary that starts enumerating $\Z \setminus \{n_1\}$ in the enumeration order that was fixed for $\Z$. For any valid identifier, %
upon seeing this sequence as input, there must exist a finite time $t_1$ at which it outputs a guess $i_{t_1}$, such that $L_{i_{t_1}}=\Z \setminus \{n_1\}$. At $t_1+1$, the adversary inputs the leftmost number in the enumeration of $\Z$ that has not been enumerated as yet. 
At this point, note that the adversary has only enumerated finitely many numbers. In particular, there is a number $n_2$ that the adversary has not yet shown in the input, such that the language $\Z \setminus \{n_2\}$ is consistent with the input presented so far. Hence, the adversary can pretend that it was enumerating $\Z \setminus \{n_2\}$ in the order of $\Z$ all this time, and switch to enumerating $\Z \setminus \{n_2\}$ beyond time $t_1+1$. There must now be a finite time $t_2$ at which the identifier outputs a guess $i_{t_2}$ that satisfies $L_{i_{t_2}}=\Z \setminus \{n_2\}$, at which point the adversary can again output the leftmost number in the enumeration of $\Z$ that has not been enumerated as yet, and then switch to a different $\Z \setminus \{n_3\}$ beyond that. By repeating this ad infinitum, the adversary completely enumerates $\Z$ (since before each ``switch point'', the adversary inputs the leftmost number in $\Z$ that has not yet been enumerated), while also inducing an infinite sequence of time steps $t_1 < t_2 < \dots$ at which the identifier does not guess $\Z$ to be the target language. Thus, the identifier fails to identify $\Z$ in the limit on this adversarially constructed input sequence.

Observe however that there is a simple identifier that identifies the collection in \Cref{example:Z-i} with a list of size 2. This identifier always maintains $\Z$ as one of its guesses, and constructs its second guess as follows. The identifier fixes an enumeration $\sigma$ of $\Z$ for itself (e.g., $\sigma=0,-1,1,-2,2,\dots$), and outputs as its second guess the language $\Z \setminus \{i\}$, where $i$ is the \textit{leftmost} number in $\sigma$ that has \textit{not} yet shown up in the input. 
Observe that if the target language chosen by the adversary was $\Z$, then the identifier's first guess renders it to be correct right from the first time step. On the other hand, if the adversary chose some $\Z \setminus \{i\}$ as its target language, then every number before $i$ in $\sigma$ will eventually show up in the input, and $i$ will never show up. Thus, the identifier's second guess will stabilize to $\Z \setminus \{i\}$ after all the numbers before $i$ in $\sigma$ have appeared in the input. In either case, we have argued that the identifier $2$-list identifies the collection in the limit.

Now consider instead the following collection:

\begin{example}
    \label{example:Z-i-j}
    Let $\mcC$ be the language collection whose constituent languages are $\Z$, $\Z \setminus \{i\}$ for every $i \in \Z$, as well as $\Z \setminus \{i,j\}$ for every $i,j \in \Z, i < j$.
\end{example}

Since this collection is a superset of the collection in \Cref{example:Z-i}, it is also not identifiable in the limit by the same diagonalization argument sketched above. Could this collection be identified with a list of size $2$? Let us try to come up with an identifier similar to the one above, where the first guess is maintained to be $\Z$, so that there is protection from diagonalization against $\Z$. However, having the second guess be $\Z \setminus \{i\}$ for the leftmost $i$ that has not yet shown up in the input would make the guess converge to $\Z \setminus \{i\}$, \textit{even} when the adversary is enumerating $Z \setminus \{i,j\}$ for some $j$ that is later than $i$ in the enumeration $\sigma$ maintained by the identifier. We may then try to maintain a \textit{joint} enumeration $\sigma'$ of the set $\Z$ together with all the pairs $(i,j)$ for $i < j$, and then output $\Z \setminus F$ for the leftmost $F$ (which is either some $\{i\}$ or $\{i,j\}$) satisfying that $F$ is not a subset of the input. Unfortunately, this strategy fails too: any $i$ must be located at some finite position $p_i$ in $\sigma'$, which means that there is some $(i,j)$ that appears later than $p_i$ in $\sigma$. The adversary may be enumerating $\Z \setminus \{i,j\}$, but the identifier's second guess will either converge to $\Z \setminus \{i\}$ or $\Z \setminus \{j\}$, depending on whether $i$ or $j$ comes first in $\sigma'$. Essentially, we want the identifier to be able to place \textit{all} the $(i,j)$ pairs for a particular $i$ before $i$, which is not possible.

As it turns out, the collection in \Cref{example:Z-i-j} is \textit{not} $2$-list identifiable. However, the diagonalization argument to show this is more intricate. In particular, observe that the choice of language that was adversarially enumerated to preclude identification in the limit in the case of \Cref{example:Z-i} was clear: it was the language $\Z$, and we capitalized on being able to confuse the identifier to guess a proper subset of $\Z$ at infinitely many time steps. Note however that a 2-list identifier may always have one of its guesses be $\Z$! Namely, the identifier may completely hedge against being diagonalized on $\Z$, in which case the adversary has to look towards some other language in the collection that it can enumerate in an adversarial manner. This is where having all the $Z \setminus \{i,j\}$ languages in addition to the $\Z \setminus \{i\}$ languages will come in handy. That is, just like how the adversary could use infinitely many $\Z \setminus \{i\}$ languages to fool against $\Z$, it can also use infinitely many $\Z \setminus \{i,j\}$ languages to fool against $\Z \setminus \{i\}$. Since the identifier cannot simultaneously ``cover all the three bases'' of $\Z, \Z \setminus \{i\}$ and $\Z \setminus \{i,j\}$ with just two guesses, it will necessarily be diagonalized against on one of the bases.

\paragraph{Lower Bound.} Concretely, consider the following adversarial strategy. The adversary fixes an enumeration of all the languages in the collection, and will keep switching between enumerating different languages; in the limit however, they will have enumerated some fixed language. Whenever the adversary switches to/resumes enumerating a language, it picks back up from the leftmost not-yet-enumerated number. So suppose that: $(\star)$ the adversary starts enumerating $\Z$. For any valid 2-list identifier, there must exist a finite time $t_{1}$ at which one of the two guesses made by the identifier is $\Z$. At this point, since the adversary has only enumerated finitely many numbers, they may: $(\star\star)$ switch to enumerating any consistent $\Z \setminus \{i\}$. Again, there must exist a finite time $t_2$ at which one of the two guesses made by the identifier is $\Z \setminus \{i\}$. Crucially, at this time, the adversary \textit{inspects} the other guess made by the identifier. If this guess is not $\Z$, then the adversary has identified a time step at which none of the two guesses made by the identifier are $\Z$, and it can go back to $(\star)$ and resume enumerating $\Z$. Otherwise, the second guess made by the identifier at this time is still maintained to be $\Z$. In this case, observe that up until $t_2$, the adversary has only enumerated finitely many strings from $\Z \setminus \{i\}$. So, they can safely switch to enumerating any $\Z \setminus \{i,j\}$ that is consistent with the input so far. There must now be a finite time $t_3$ at which the identifier has $\Z \setminus \{i,j\}$ as one of its two guesses. But now the identifier is trapped, in that it must have let go of one of $\Z$ or $\Z \setminus \{i\}$ as its other guess---this is what the adversary capitalizes on. If the second guess is still $\Z$, then the adversary goes back to $(\star\star)$, and resumes enumerating $\Z \setminus \{i\}$. Otherwise, the adversary goes back to $(\star)$, and resumes enumerating $\Z$.

If the adversary repeats this routine ad infinitum, in the limit, one of two cases happens: (i) Either the adversary stops jumping back to $(\star)$ entirely, and keeps jumping back to $(\star \star)$ infinitely often, in which case it will have produced a valid enumeration of some fixed $\Z \setminus \{i\}$. But every jump back to $(\star\star)$ witnesses a time step at which the identifier did not have $\Z \setminus \{i\}$ in its two guesses, which means that the identifier does $2$-list identify $\Z \setminus \{i\}$ in the limit. (ii) Or, the adversary jumps back infinitely often to $(\star)$. In this case, the adversary produces an infinite enumeration of $\Z$. But every jump back to $(\star)$ witnesses a time step where the identifier did not have $\Z$ in its two guesses. Thus, the identifier does not 2-list identify $\Z$ in the limit.

In any case, the adversarial strategy is guaranteed to make the 2-list identifier fail on some language --- either $\Z$ or $\Z \setminus \{i\}$. Crucially, observe how the language that the adversary uses to diagonalize is \textit{dependent} on the 2-list identifier they are interacting with, and only gets determined in the limit. This is fundamentally different from the adversarial strategy used in \Cref{example:Z-i}, where the adversary can commit to diagonalizing with $\Z$ against all identifiers.

\paragraph{The Characterizing Condition.} We can inspect the structure in the collection used in \Cref{example:Z-i-j} more closely to extract a characterizing condition that prevents 2-list identification. Recall that the adversarial strategy finally fooled the identifier on either $\Z$ or some $\Z \setminus \{i\}$. This adversarial strategy was made possible because of the following property: at any finite time in the process of enumerating $\Z$, the adversary could switch to a \textit{proper subset} of $\Z$ (namely some $\Z \setminus \{i\}$) that was consistent with the input so far (i.e., which contained the finite input enumerated so far). By itself, this property is sufficient for diagonalizing against standard identification in the limit. However, the additional crucial property that enabled diagonalizing against identification with two guesses, was the fact that at any finite time in which the adversary was also enumerating some $\Z \setminus \{i\}$, they could safely switch to enumerating a \textit{further proper subset} of $\Z \setminus \{i\}$ (namely some $\Z \setminus \{i,j\}$). In other words, the underlying property that enabled the diagonalization was: there exists a language in the collection (namely $\Z$), such that for every finite subset $T$ of it, there exists a language $L'$ in the collection (namely $\Z \setminus \{i\}$) that contains $T$ \textit{and} is a proper subset of $L$, such that furthermore, for every finite subset $T'$ of $L'$, there exists a language $L''$ in the collection (namely $\Z \setminus \{i,j\}$) that contains $T''$ \textit{and} is a proper subset of $L'$. This is precisely the negation of the condition that characterizes $2$-list identification, and the generalization of this condition to $k$ levels is precisely the negation of the $k$-Angluin condition that we state in \Cref{sec:condition}.

\paragraph{Upper Bound.} Let us now consider a collection that does not have the pathological structure that the collection in \Cref{example:Z-i-j} has. Namely, every language $L$ contains a finite ``first-level tell-tale'' set $T$, such that every language $L'$ that is a proper subset of $L$ necessarily satisfies one of two properties: (1) Either $L'$ does not contain $T$, or (2) $L'$ itself contains a finite ``second-level tell-tale'' $T'$ such that every language $L''$ that is a proper subset of $L'$ does not contain $T'$. In this case, there is a natural way to utilize the existence of such layered tell-tales and state a recursive 2-list identification algorithm. The algorithm is a generalization of Angluin's algorithm given in \cite{angluin1980inductive}, and operates as follows: Suppose the target language is language $L_z$. At any time step $t$, the algorithm finds the leftmost language $L$ in the collection that is consistent with the input, and whose first-level tell-tale $T$ is entirely contained in the input. We can then see that beyond a large enough time, this language $L$ stabilizes to the \textit{leftmost} language in the collection that is a \textit{superset} of the target language $L_z$, and whose first-level tell-tale is contained in $L_z$. If the collection were in fact identifiable in the limit, we can already stop here: Angluin's condition would immediately imply that $L_z$ cannot be a proper subset of $L$, meaning that $L$ must equal $L_z$. But in our case, it \textit{is} possible for a language $L$ to have proper subset languages that contain its tell-tale. Nevertheless, the condition above stipulates that any such language $L$ better contain a \textit{second-level} tell-tale. So, we continue reasoning as follows: we set $L$ to be the first of the two guesses made by the algorithm, and then recursively run the same process on all the languages to the \textit{right} of $L$, that are proper subsets of $L$, and contain its first-level tell-tale $T$! It is not too hard to see that by virtue of the condition, either $L$ was equal to $L_z$ in the first place, or the second-level tell-tale of $L_z$ suffices to discern it from all the other proper subsets of $L$ that contained its first-level tell-tale.

\paragraph{Stratification of the Collection.} The recursive algorithm above, as well as the layered nature of our condition, naturally also inspire a greedy procedure to stratify a $2$-list identifiable collection into two separate identifiable collections. By the characterization above, any $2$-list identifiable collection must satisfy our condition of having suitable first-level and second-level tell-tales. The greedy procedure is then as follows: let us construct a directed graph over the languages in the collection, where we draw a directed edge $L \to L'$ for every pair $L,L'$ which satisfies that $L'$ is a proper subset of $L$, and $L'$ contains the first-level tell-tale of $L$. Intuitively, for any such pair $L, L'$, the first-level tell-tale does not suffice to discern between them. So, we set $\mcC_1 \subseteq \mcC$ to comprise of all the languages $L \in \mcC$, that do not have any directed edge coming into it. Then, for any pair of languages $L,L'$ in $\mcC_1$, it must be the case that neither of them contain the other's first-level tell-tale (otherwise, one of them would have a directed edge coming into it, and would not be added to $\mcC_1$). In other words, within $\mcC_1$, the first-level tell-tales of every language suffice as tell-tales in the standard version of Angluin's condition; hence, $\mcC_1$ is identifiable in the limit. On the other hand, observe also that any language $L'$ in $\mcC \setminus \mcC_1 := \mcC_2$ must also contain a second-level tell-tale, by virtue of it satisfying our nested condition. This is because every language $L' \in \mcC_2$ satisfies that it is a proper subset of some other language $L$, and that it also contains the first-level tell-tale of $L$. Thus, within $\mcC_2$, the second-level tell-tales of all languages suffice as tell-tales in the standard version of Angluin's condition, making $\mcC_2$ also identifiable in the limit. We have thus partitioned $\mcC$ into $\mcC_1$ and $\mcC_2$ in a way that both $\mcC_1$ and $\mcC_2$ are identifiable in the limit. We can repeat this greedy peeling procedure $k$ times over residual collections to generalize to the case where the collection is $k$-list identifiable in the limit.

\paragraph{Statistical Rates.} We will now describe our results on finite-sample rates in the statistical setting. These results extend those of \cite{kalavasis2025limits} for identification pointwise to the list identification setting.

We will first argue that any collection that is $k$-list identifiable in the limit can be $k$-list identified at an exponential rate. From our stratification result above, we know that if $\mcC$ is $k$-list identifiable, it can be decomposed into $k$ individually identifiable collections. Since \cite{kalavasis2025limits} show that every identifiable collection can be identified at an exponential rate, this is already sufficient to conclude that $\mcC$ may be $k$-list identified at an exponential rate: simply break $\mcC$ into $\mcC_1,\dots,\mcC_k$, and concatenate the outputs of individual identifiers (that achieve an exponential rate) running on each of these collections. Since the target language belongs to one of the $k$ collections, the identifier running on that collection yields the desired exponential rate for list identification.

It is also straightforward to see that an exponential rate is essentially the best rate possible.\footnote{upto ``trivial'' collections that can always be list-identified with a single example, see \Cref{sec:exponential-rate-best-possible}.} In particular, suppose that the collection contains $k+1$ distinct languages that all share a string $x$, and consider a distribution that assigns mass $1/2$ to $x$. Then, there is at least a $2^{-t}$ chance that an i.i.d. sequence of $t$ examples drawn from the distribution comprises only of $x$. In this case, since the algorithm only outputs $k$ guesses at any time step, at least one of the $k+1$ languages is not identified at time $t$ if this input sequence is realized. With a couple more steps of simple reasoning, this argument can be formalized to show that a rate faster than exponential is not possible.

Similar to \cite{kalavasis2025limits}, the major bulk of the work goes into showing that a collection that is not $k$-list identifiable in the limit cannot be $k$-list identified at \textit{any} vanishing rate. Here, we note that \cite{kalavasis2025limits} were able to directly use a previous result by Angluin \citep{angluin1988identifying} in their proof, which relates identification with respect to an i.i.d. stream to the standard online setting of identification. In fact, Angluin draws this connection by using results established by Pitt \citep{pitt_thesis} in the related setting of \textit{probabilistic} identification in the limit.\footnote{In probabilistic identification, an identifier is allowed to be randomized, and we care about the probability (over the random coins of the identifier) that the guesses of the identifier converge to the target language on any fixed sequence.} Essentially, Pitt relates identification in the limit to probabilistic identification in the limit, which Angluin in turn relates to identification with respect to an i.i.d. sequence. While it sufficed for \cite{kalavasis2025limits} to directly instantiate the end result, we are not afforded this convenience, and have to carefully re-derive the entire bridge ourselves for the setting of list identification. In particular, we show a standalone result analogous to that obtained by Pitt, relating $k$-list identification to probabilistic $k$-list identification. The main part of this result, which converts a probabilistic $k$-list identifier to a standard deterministic $k$-list identifier, requires several ideas, including a careful ``$\topk$'' aggregation scheme over the predictions made by different nodes in the \textit{computation tree} of a probabilistic $k$-list identifier. We also redo Angluin's part of the analysis in more detail, relating probabilistic $k$-list identification to $k$-list identification over an i.i.d.\ stream of examples. The end result of this bridge asserts that for any collection that is not $k$-list identifiable in the limit, for any $k$-list identifier, there exists a distribution over a target language such that with probability at least $\frac{1}{k+1}$ over an i.i.d.\ stream of examples drawn from this distribution, the $k$-list identifier fails to $k$-list identify the target language in the limit. The last part of the analysis then shows how any $k$-list identifier that achieves a vanishing rate with respect to arbitrary distributions may be boosted up to $k$-list identify from an i.i.d.\ stream from any distribution with probability 1, which contradicts the previous assertion.

\paragraph{Roadmap.} With this overview, we now move on to formally deriving all our results. We first state all the required technical preliminaries in \Cref{sec:preliminaries}. We then precisely state the $k$-Angluin condition for $k$-list identification in \Cref{sec:condition}. We derive a $k$-list identification algorithm for any collection that satisfies this condition in \Cref{sec:ub}. \Cref{sec:lb} contains the diagonalization argument which shows that the condition is in fact necessary for $k$-list identification. In \Cref{sec:stratification}, we detail the structural stratification result which decomposes any $k$-list identifiable collection into $k$ identifiable collections. Finally, we derive all results related to statistical rates, together with intermediate results on probabilistic list identification, in \Cref{sec:list-identification-rates}.
\section{Preliminaries}
\label{sec:preliminaries}

$\N=\{1,2,3,\dots\}$ denotes the set of natural numbers, and $\Z=\{\dots,-2,1,0,1,2,\dots\}$ denotes the set of integers. We assume a countable collection $\mcC=(L_1,L_2,\dots)$ of languages, where every language is a subset of a countable universe $U$. An enumeration of a language $L$ is a sequence $x_1,x_2,\dots$ which satisfies that every $x_i \in L$, and furthermore, for every $x \in L$, there exists $i$ such that $x_i = x$. We only assume that every language in the collection is non-empty; otherwise, it may be finite, or countably infinite. Throughout the paper, we will reserve the variable $k$ to denote the size of the list output by a learner that is trying to do list identification, or a \textit{list identifier}.

\begin{definition}[List Identifier]
    \label{def:list-identifier}
    A list identifier with list size $k$, or a $k$-list identifier, is a function\footnotemark ~$\mcA:U^* \to \N^{k}$, which at any time step $t$, takes as input a finite ordered sequence of strings $x_1,\dots,x_t$, and outputs a list of indices $\mu_t=\mcA(x_1,\dots,x_t)=(i_1,\dots,i_k)$.
\end{definition}
\footnotetext{In this paper, we only focus on the existence of list-valued functions that can identify in the limit; we will not be concerned about the computational power required to compute this function.}

For a language $L \in \mcC$, and a list $\mu \in \N^*$, we use the notation $L \id \mu$ to denote that $\exists i \in \mu$ such that $L_i=L$. In other words, $\mu$ contains the ``identity'' of language $L$, upto equivalent copies of $L$ at different indices in $\mcC$. Similarly, $L \not \id \mu$ denotes that $\forall i \in \mu$, $L_i \neq L$. The notation $\mu \di L_z$ and $\mu \not\di L_z$ represents the corresponding relations with the order of the arguments reversed.

\begin{definition}[List Identification in the Limit]
    \label{def:list-identification-in-the-limit}
    A $k$-list identifier $\mcA$ identifies a countable collection $\mcC=(L_1,L_2,\dots)$ in the limit if for every language $L_z\in \mcC$, and for every enumeration $x_1,x_2,\dots$ of $L_z$ presented to $\mcA$ as input, there exists a finite time $t^\star$ such that for every $t \ge t^\star$, the list $\mu_t=\mcA(x_1,\dots,x_t)$ output by $\mcA$ satisfies that $L_z \id \mu_t$.
\end{definition}

We now define the notion of a \textit{valid distribution}, which is any distribution that is supported on some language in the collection.

\begin{definition}[Valid Distribution]
    \label{def:valid-distribution}
    A distribution $\mcD$ over $U$ is said to be valid for a language $L$ if $\mcD(x) > 0$ if and only if $x \in L$.
\end{definition}

We can then describe statistical rates that an identifier may attain with respect to valid distributions.

\begin{definition}[List Identification Rates]
    \label{def:list-identification-rates}
    For a countable language collection $\mcC$, and a rate function $R:\N \to [0,1]$ satisfying $\lim_{t \to \infty}R(t)=0$, we say that:
    \begin{itemize}
        \item $\mcC$ can be $k$-list identified at rate $R$ if there exists a $k$-list identifier $\mcA$ 
        which satisfies that for every language $L_z \in \mcC$, and for every distribution $\mcD$ that is valid for $L_z$, there exist constants
        $c_1=c_1(L_z, \mcD, \mcC)>0$ and $c_2=c_2(L_z,\mcD, \mcC) > 0$ 
        such that for every $t \in \N$,
        \begin{align}
            \Pr_{x_1,\dots,x_t \sim \mcD^t}[\mcA(x_1,\dots,x_t) \not\di L_z] \le %
            c_1 \cdot R(c_2 \cdot t).
        \end{align}
        
        \item $\mcC$ cannot be $k$-list identified at a rate faster than $R$ if for every $k$-list identifier $\mcA$, there exists a language $L_z \in \mcC$, a distribution $\mcD$ valid for $L_z$, %
        and constants 
        $c_1=c_1(L_z, \mcD, \mcC)>0, c_2=c_2(L_z,\mcD, \mcC) > 0$
        such that for infinitely many $t \in \N$,
        \begin{align}
            \Pr_{x_1,\dots,x_t \sim \mcD^t}[\mcA(x_1,\dots,x_t) \not\di L_z] \ge %
            c_1 \cdot R(c_2 \cdot t).
        \end{align}
    \end{itemize}
    Furthermore, we have the following:
    \begin{itemize}
        \item (Optimal Rate) Rate $R$ is the optimal rate at which $\mcC$ can be $k$-list identified if $\mcC$ can be $k$-list identified at rate $R$, but cannot be $k$-list identified at a rate faster than $R$.
        \item (No Rate) $\mcC$ cannot be $k$-list identified at any rate if for every $k$-list identifier $\mcA$, there exists a language $L_z \in \mcC$ and a distribution $\mcD$ valid for $L_z$ such that 
        \begin{align}
            \limsup_{t \to \infty}\Pr_{x_1,\dots,x_t \sim \mcD^t}[\mcA(x_1,\dots,x_t)\not\di L_z]>0.
        \end{align}
    \end{itemize}
\end{definition}

\section{The $k$-Angluin Condition}
\label{sec:condition}

We begin by precisely stating the $k$-Angluin condition that characterizes identification in the limit with a list, and plays a central role throughout the rest of the paper. Fix a countable language collection $\mcC=(L_1,L_2,\dots)$. For language $L_i$ at index $i$ in $\mcC$ %
and list size parameter $k \ge 1$, define the predicate $\Psi(L_i, k)$\footnote{We remark that the predicate $\Psi$ has an implicit dependence on $\mcC$ (which we have fixed); we suppress this dependence in the notation for readability.} inductively as follows:

\begin{align}
    &\textbf{Base Predicate. } 
    \nonumber \\
    &\Psi(L_i, 1) := \exists \text{ finite } T^{(1)}_{i} \subseteq L_i \text{ such that } \forall j \in \N, \text{ if } L_j \subsetneq L_i \text{ then }  T^{(1)}_{i} \nsubseteq L_j. \label{eqn:base-predicate}\\
    \nonumber \\
    &\textbf{Inductive Predicate. } 
    \forall k > 1: %
    \nonumber \\
    &\Psi(L_i, k) := \exists \text{ finite } T^{(k)}_{i} \subseteq L_i \text{ such that } \forall j \in \N, 
    \text{ if } L_j \subsetneq L_i \text{ then } \text{either }T^{(k)}_{i} \nsubseteq L_j \text{ or } \Psi(L_j, k-1) \label{eqn:inductive-predicate}.
\end{align}
The precise ``$k$-Angluin'' condition that characterizes $k$-list identification is then as follows:
\begin{align}
    \label{eqn:condition}
    \boxed{
        \forall i \in \N,\; \Psi(L_i, k).
    }
\end{align}

The sets $T^{(k)}_i$ are known as \textit{tell-tale} sets for the language $L_i$. We note that \eqref{eqn:condition} instantiated with $k=1$, i.e., simply having the base predicate \eqref{eqn:base-predicate} hold for every language is precisely Angluin's condition that characterizes standard language identification in the limit without a list \citep{angluin1980inductive}.

\begin{remark}[Uniqueness of tell-tales across levels]
    \label{remark:uniqueness-of-tell-tales}
    Suppose we have that $\Psi(L_1, 2)$ and $\Psi(L_2, 2)$ for two languages $L_1$ and $L_2$ in $\mcC$. Then, consider some $T^{(2)}_1$ and $T^{(2)}_2$ that satisfy the respective predicates, and suppose it is the case that language $L_3 \in \mcC$ is a proper subset of both $L_1$ and $L_2$, and also contains both of $T^{(2)}_1$ and $T^{(2)}_2$. Then, according to \eqref{eqn:inductive-predicate}, in order for $\Psi(L_1, 2)$ to be true, $\Psi(L_3, 1)$ needs to be true, which implies the existence of a $T^{(1)}_3 \subseteq L_3$ satisfying \eqref{eqn:base-predicate}. Similarly, in order for $\Psi(L_2, 2)$ to be true, $\Psi(L_3, 1)$ needs to be true again, which, strictly speaking, implies the existence of a $\tilde{T}^{(1)}_3 \subseteq L_3$ satisfying \eqref{eqn:base-predicate}. Notably, observe that the definition allows for $T^{(1)}_3$ and $\tilde{T}^{(1)}_3$ to be different: as long as \eqref{eqn:base-predicate} is satisfied for both, there is no contradiction to either of $\Psi(L_1, 2)$ or $\Psi(L_2, 2)$. Nevertheless, observe that we can have one of $T^{(1)}_3$ or $\tilde{T}^{(1)}_3$ simultaneously satisfy both the ``separate instantiations'' of  $\Psi(L_3, 1)$. 
    Therefore, without loss of generality, whenever we consider an instantiation of the predicate $\Psi(L_i, k)$ to hold, we can assume it to hold with a unique tell-tale $T^{(k)}_i$.
    We note also that if $\Psi(L_i, k)$ holds for any language $L_i$, the tell-tale $T^{(k)}_i$ that makes $\Psi(L_i, k)$ hold also suffices to make  $\Psi(L_i, k+1)$ hold. Thus, for any language $L_i$, the tell-tale $T^{(k)}_i$ for the smallest $k$ for which $\Psi(L_i, k)$ holds suffices as a tell-tale for every predicate $\Psi(L_i, k')$ where $k' \ge k$. 
\end{remark}

\begin{remark}[Canonical Collections]
    \label{remark:canonical-collections}
    It is helpful to keep in mind the suite of canonical collections $(\mcC_k)_{k \ge 1}$, where $\mcC_k = \{\Z\} \cup \{\Z \setminus F: F \subseteq \Z, |F| \le k\}$. Namely, every language in $\mcC_k$ excludes a finite subset $\Z$ of size at most $k$. We can observe that $\mcC_k$ does not satisfy the $k$-Angluin condition, but satisfies the $(k+1)$-Angluin condition. To see why $\mcC_k$ does not satisfy the $k$-Angluin condition, observe that no matter what finite subset $T \subseteq \Z$ we try to ascribe to the language $\Z$ for the purposes of satisfying $\Psi(\Z,k)$, there are many languages $\Z \setminus \{i\}$ that are proper subsets of $\Z$, contain $T$, and also do not satisfy $\Psi(\Z \setminus \{i\}, k-1)$. On the other hand, any arbitrary assignment of finite tell-tale sets $T_i^{(k')}$ for every language $L_i \in \mcC_k$ and $k' \le k$ suffices for the purposes of satisfying the $(k+1)$-Angluin condition for $\mcC_k$. To see this, notice that once we fix a tell-tale set $T_i^{(k')}$ for some language $L_i=\Z \setminus F$ towards the purpose of satisfying $\Psi(L_i, k')$, the languages $L_j$ that are proper subsets of $L_i$ necessarily satisfy that $L_j = \Z \setminus G$ for $|G| < |F|$. Hence, the languages $L_i$ for which we recursively hit the base predicate $\Psi(L_i, 1)$ all satisfy that $L_i = \Z \setminus F$ where $|F|=1$, and no two languages of this form are proper subsets of each other. We also note that the collection $\mcC_\infty = \{\Z\} \cup \{\Z \setminus F: F \subseteq \Z, |F| < \infty \}$ does not satisfy the $k$-Angluin collection for any finite value of $k$.
\end{remark}
\section{Upper Bound}
\label{sec:ub}

We will now show that any countable collection that satisfies the $k$-Angluin condition can be $k$-list identified in the limit. Towards this, consider \Cref{algo:list-identification} for $k$-list identification.

\begin{algorithm}[t]
    \SetAlgoLined
    \caption{List Identification Algorithm}\label{algo:list-identification}
    \KwIn{List size $k$, set of indices $I$ satisfying $\forall i \in I:\Psi(L_i, k)$, finite dataset $S$}
    \KwOut{A list of size at most $k$}
    \SetKwFunction{ListIdentify}{ListIdentify}
    \SetKwProg{Proc}{Procedure}{:}{}
    \Proc{\ListIdentify{$k, I, S$}}{
        \If{$I$ is empty or $k=0$}{
            \Return{$\emptyset$}
        }
        \nl $i^\star \gets \min\left\{i \in I: S \subseteq L_i \text{ and } S \supseteq T^{(k)}_{i}\right\}$\footnotemark \label{line:L} \\
        \If{$i^\star = \infty$}{
            \Return{$\{1\}$}
        }
        \nl $I' \gets \{j \in I: j > i^\star \text{ and } L_j \subsetneq L_{i^\star} \text{ and } T^{(k)}_{i^\star} \subseteq L_j\}$ \label{line:C}\\
        \Return $\{i^\star\} \, \cup$ \texttt{ListIdentify}$(k-1, I', S)$
    }
\end{algorithm}
\footnotetext{We define $\min$ over an empty set to be $\infty$.}

Observe that for any $k,I,S$, the size of the list output by $\texttt{ListIdentify}(k,I, S)$ is at most $k$. This is because every time we recursively invoke \texttt{ListIdentify}, we append a language to the output, but also decrease $k$ by 1; furthermore, we return $\emptyset$ when $k=0$. This establishes that the size of the list output is at most $k$.

We now claim that when the algorithm above is invoked on increasing prefixes of an input enumeration, the index $i^\star$ determined in \Cref{line:L} stabilizes beyond some finite time step.

\begin{claim}[$i^\star$ stabilizes]
    \label{claim:L-stabilizes}
    Let the index of the target language being enumerated be $z$, and let the enumeration be $(x_1,x_2,\dots)$. Let $I$ be any set of indices that contains $z$, and let $k \ge 1$ be such that $\forall i \in I,\; \Psi(L_i, k)$. Denote by $S_t$ the set of distinct strings in a finite prefix $(x_1,\dots,x_t)$ of the enumeration. Then, there exists a finite time step $t^\star$, such that for every $t \ge t^\star$, the index $i^\star$ determined in \Cref{line:L} of \Cref{algo:list-identification} upon invoking \textup{\texttt{ListIdentify}}$(k,I, S_t)$ satisfies
    \begin{align}
        \label{eqn:i-star-stabilized-value}
        i^\star = \min\left\{i \in I: i \le z \text{ and } L_i \supseteq L_z \text{ and } L_z \supseteq T^{(k)}_i\right\}.
    \end{align}
    Note that $i=z$ satisfies the condition, and hence $i \neq \infty$ for $t \ge t^\star$.
\end{claim}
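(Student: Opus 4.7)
The plan is to show that $i^\star$ stabilizes to the value $i^{\star\star} := \min\{i \in I : i \le z,\; L_i \supseteq L_z,\; L_z \supseteq T^{(k)}_i\}$ by establishing it as both an eventual upper bound and an eventual lower bound on $i^\star$. First I would check $i^{\star\star}$ is well-defined: the index $z$ lies in $I$, satisfies $z \le z$ and $L_z \supseteq L_z$, and, since $\Psi(L_z, k)$ holds by hypothesis, also satisfies $T^{(k)}_z \subseteq L_z$. Hence the set on the right is nonempty, its minimum exists, and $i^{\star\star} \le z$ is a finite natural number.

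For the upper bound, I would argue that the candidate $i^{\star\star}$ itself eventually qualifies in \Cref{line:L}. Since every $x_t$ belongs to $L_z \subseteq L_{i^{\star\star}}$, the inclusion $S_t \subseteq L_{i^{\star\star}}$ holds at every $t$. The finite set $T^{(k)}_{i^{\star\star}} \subseteq L_z$ is, by the definition of an enumeration, covered in full by some finite time $t_1$, so $S_t \supseteq T^{(k)}_{i^{\star\star}}$ for all $t \ge t_1$. Consequently $i^\star \le i^{\star\star}$ on this range.

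For the lower bound, I would show that each of the finitely many $i \in I$ with $i < i^{\star\star}$ eventually fails to satisfy the conjunction in \Cref{line:L}. Since $i < i^{\star\star} \le z$ forces $i \le z$, the minimality of $i^{\star\star}$ means the failure must come from one of two sources: (a) $L_i \not\supseteq L_z$, in which case some $x \in L_z \setminus L_i$ appears at a finite time $\tau_i$ and $S_t \not\subseteq L_i$ for every $t \ge \tau_i$; or (b) $L_i \supseteq L_z$ but $L_z \not\supseteq T^{(k)}_i$, in which case some $y \in T^{(k)}_i \setminus L_z$ never appears in the enumeration and $T^{(k)}_i \not\subseteq S_t$ for every $t$. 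Letting $t_2$ be the maximum of $\tau_i$ over the (finitely many) case-(a) indices, no $i < i^{\star\star}$ qualifies once $t \ge t_2$. Taking $t^\star := \max(t_1, t_2)$ then yields $i^\star = i^{\star\star}$ for every $t \ge t^\star$.

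The argument is essentially bookkeeping, and I don't anticipate a genuine obstacle. The only point that needs care is verifying that cases (a) and (b) together exhaust the ways a smaller index can fail the condition defining $i^{\star\star}$, which in turn relies on the observation that the constraint $i \le z$ is automatic for $i < i^{\star\star}$ because $i^{\star\star} \le z$. Everything else reduces to the two defining properties of an enumeration of $L_z$ (every element of $L_z$ eventually appears; no element outside $L_z$ ever appears) together with the fact that the tell-tales $T^{(k)}_i$ are finite.
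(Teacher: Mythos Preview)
Your proof is correct and follows essentially the same approach as the paper's: both arguments partition the indices $i \le z$ according to whether $L_i \supseteq L_z$ and whether $L_z \supseteq T^{(k)}_i$, then use finiteness of the tell-tales and the enumeration properties to obtain times $t_1,t_2$ after which the minimum stabilizes. Your upper-bound/lower-bound framing is slightly cleaner in that you only verify feasibility of the single index $i^{\star\star}$ rather than of all of $J$, but the core reasoning is identical.
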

\begin{proof}

    Consider first any $i \in I$ for which $i \le z$ and $L_i \supseteq L_z$, but $L_z \nsupseteq T^{(k)}_i$. Note that such an $L_i$ never satisfies $S_t \supseteq T^{(k)}_i$ for any $t$. This is simply because $T^{(k)}_i$ contains some string $x \notin L_z$ which will never show up in the input. Thus, such an $i$ never becomes feasible in \Cref{line:L}.

    Now consider the set $J=\{i \in I: i \le z \text{ and } L_i \supseteq L_z \text{ and } L_z \supseteq T^{(k)}_i\}$, which is precisely the set in \eqref{eqn:i-star-stabilized-value}. Note first that $J$ is a finite and non-empty set, since $z$ belongs to it. Furthermore, for any $i \in J$, observe that $S_t \subseteq L_i$ for every $t$, simply because the input is an enumeration of $L_z$, and $L_i \supseteq L_z$. We now claim that there exists a finite time $t_1$, such that for every $t \ge t_1$, $S_t \supseteq T^{(k)}_i$ for every $i \in J$. Again, this is true because $L_z \supseteq T^{(k)}_i$ for every $i \in J$, and since there are finitely many $i \in J$, and each $T^{(k)}_i$ is also a finite set, there is a finite time $t_1$ when all of $\cup_{i \in J} T^{(k)}_i$ shows up in the input. Thus, for every $t \ge t_1$, every $L_i$ for $i \in J$ is feasible in \Cref{line:L}.

    Finally, observe also there exists a finite time step $t_2$, such that for every $t \ge t_2$, every $i \in I$ for which $i \le z$ and $L_i \nsupseteq L_z$ satisfies that $S_{t} \nsubseteq L_i$: this is simply because for every such $i$, $L_z$ contains some string $x_i \notin L_i$, and this $x_i$ appears in the enumeration at some finite time. Since there are only finitely many such $i$, there is a finite time $t_2$ when the input will contain $x_i$ for every $i$. Thus, for every $t \ge t_2$, we have that every such $i$ is infeasible in \Cref{line:L}.

    Combining the reasoning above, we can conclude that for every $t \ge \max(t_1, t_2)$, the index $t^\star$ determined in \Cref{line:L} satisfies
    \begin{align*}
        i^\star=\min\left\{i \in I: i \le z \text{ and } L_i \supseteq L_z \text{ and } L_z \supseteq T^{(k)}_i\right\}.
    \end{align*}

\end{proof}

Note that the expression for $i^\star$ in \eqref{eqn:i-star-stabilized-value} does not depend on $t$, and hence $i^\star$ has stablized to this value for every $t \ge t^\star$.

We are now ready to argue that any collection that satisfies the $k$-Angluin condition can be $k$-list identified by \Cref{algo:list-identification}.

\begin{theorem}[Upper Bound]
    \label{thm:k-angluin-condition-ub}
    Let $\mcC=(L_1,L_2,\dots)$ be a countable collection that satisfies the $k$-Angluin condition given in \eqref{eqn:condition}. Let $L_z$ be the target language being enumerated as $(x_1,x_2,\dots)$, and let $S_t$ denote the set of distinct strings in a finite prefix $(x_1,\dots,x_t)$ of the enumeration. Then, there exists a finite $t^\star$, such that for every $t \ge t^\star$, $L_z \id$ \textup{\texttt{ListIdentify}}$(k,\N, S_t)$.
\end{theorem}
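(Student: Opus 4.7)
The plan is to prove, by induction on $k$, the following generalized statement: whenever $I \subseteq \N$ contains an index $z'$ with $L_{z'} = L_z$ and satisfies $\Psi(L_i, k)$ for every $i \in I$, there exists a finite time beyond which $L_z \id \texttt{ListIdentify}(k, I, S_t)$. The theorem is the special case $I = \N$, which is legal because the collection satisfies the $k$-Angluin condition \eqref{eqn:condition}. At each step I would use Claim~\ref{claim:L-stabilizes} to reduce reasoning about the input stream to reasoning about the stabilized index $i^\star$, after which the algorithm's behavior is fully determined by the combinatorial structure of the predicate $\Psi$.

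\textbf{Base case.} For $k = 1$, Claim~\ref{claim:L-stabilizes} produces a time $t^\star$ beyond which $i^\star$ stabilizes to the minimum $i \in I$ with $i \le z$, $L_i \supseteq L_z$, and $L_z \supseteq T^{(1)}_i$. The recursive call is with $k-1 = 0$ and hence returns $\emptyset$, so the output is exactly $\{i^\star\}$. I would then argue $L_{i^\star} = L_z$ by contradiction: if $L_z \subsetneq L_{i^\star}$, then $T^{(1)}_{i^\star} \subseteq L_z$ directly contradicts the base predicate $\Psi(L_{i^\star}, 1)$, which forbids any proper sublanguage of $L_{i^\star}$ from containing $T^{(1)}_{i^\star}$. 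Hence $L_z \id \{i^\star\}$.

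\textbf{Inductive step.} For $k > 1$, apply Claim~\ref{claim:L-stabilizes} to obtain a finite $t^\star$ past which $i^\star$ stabilizes to a value $i^\star_\infty$, and consequently the set $I'$ computed in Line~\ref{line:C} stabilizes to $I'_\infty = \{\, j \in I : j > i^\star_\infty,\; L_j \subsetneq L_{i^\star_\infty},\; T^{(k)}_{i^\star_\infty} \subseteq L_j \,\}$. If $L_{i^\star_\infty} = L_z$, we are done because $i^\star_\infty$ itself identifies $L_z$. Otherwise $L_z \subsetneq L_{i^\star_\infty}$ with $T^{(k)}_{i^\star_\infty} \subseteq L_z$, which forces $i^\star_\infty < z$; in particular $z \in I'_\infty$. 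The key structural observation is that for every $j \in I'_\infty$, the defining clause of $\Psi(L_{i^\star_\infty}, k)$ applied to the proper sublanguage $L_j$ (which does contain $T^{(k)}_{i^\star_\infty}$) forces $\Psi(L_j, k-1)$. Thus $I'_\infty$ satisfies the inductive hypothesis with parameter $k-1$, and there exists a later time $t^{\star\star} \ge t^\star$ past which $L_z \id \texttt{ListIdentify}(k-1, I'_\infty, S_t)$. Past $t^{\star\star}$ the full output $\{i^\star_\infty\} \cup \texttt{ListIdentify}(k-1, I'_\infty, S_t)$ identifies $L_z$.

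\textbf{Main obstacle.} The nontrivial step is verifying that the recursive call is legally set up — namely that the stabilized residual set $I'_\infty$ still contains an index for $L_z$ and that the predicate $\Psi$ downshifts correctly across the recursion. Both points hinge on exploiting the shape of the inductive clause \eqref{eqn:inductive-predicate}: once $L_z$ has slipped below $L_{i^\star_\infty}$ while containing its tell-tale $T^{(k)}_{i^\star_\infty}$, the \emph{only} escape permitted by $\Psi(L_{i^\star_\infty}, k)$ is that $L_z$ (and every other such intermediate language) must itself satisfy $\Psi(\cdot, k-1)$. This is precisely the layered/recursive structure the $k$-Angluin condition encodes, and it is the reason the single greedy choice of $i^\star$ at each level, combined with standard Angluin-style tell-tale reasoning, composes into a valid $k$-list identifier.
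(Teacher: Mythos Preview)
Your proposal is correct and follows essentially the same argument as the paper: the paper unrolls the recursion explicitly from level $k$ down to level $1$, whereas you package the same reasoning as a formal induction on $k$, but the content at each step---invoke Claim~\ref{claim:L-stabilizes} to stabilize $i^\star$, handle the case $L_{i^\star}=L_z$, otherwise show $z$ lies in the residual index set $I'$ and that $\Psi(\cdot,k-1)$ holds there---is identical. One minor notational slip: in your generalized hypothesis you allow an arbitrary index $z'$ with $L_{z'}=L_z$ in $I$, but in the inductive step you revert to writing $z$; just track $z'$ consistently (or, as the paper does, simply track the original $z$ through every residual set, since $z\in\N=I_k$ at the outset).
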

\begin{proof}
    Let $I_k=\N$. The set $I_k$ contains the index $z$ of the target language $L_z$, and since $\mcC$ satisfies the $k$-Angluin condition, we have that $\forall i \in I_k,\; \Psi(L_i, k)$. Hence, the requirements of \Cref{claim:L-stabilizes} are satisfied, implying the existence of a finite time step $t^\star_k$, such that for every $t \ge t^\star_k$, the index $i^\star_k$ determined in \Cref{line:L} of \texttt{ListIdentify}$(k,I_k, S_t)$ is
    \begin{align*}
        i^\star_k=\min\left\{i \in I_k: i \le z \text{ and } L_i \supseteq L_z \text{ and } L_z \supseteq T^{(k)}_i\right\}.
    \end{align*}
    Furthermore, $i^\star_k$ is always included in the output list for $t \ge t^\star_k$. Now, either $L_{i^\star_k}=L_z$, in which case we are done. Otherwise, $i^\star_k < z$, and $L_{i^\star_k}$ is a proper superset of $L_z$ for which $L_z \supseteq T^{(k)}_{i^\star_k}$. Now, since $i^\star_k$ has stabilized for $t \ge t^\star_k$, the set of indices determined in \Cref{line:C} also stabilizes for $t \ge t^\star_k$. Denote this set by $I_{k-1}$, and recall that 
    \begin{align*}
        I_{k-1} = \{j \in I_k: j > i^\star_k \text{ and } L_j \subsetneq L_{i^\star_k} \text{ and } T^{(k)}_{i^\star_k} \subseteq L_j\}.
    \end{align*}
    By our previous reasoning, $z \in I_{k-1}$. Furthermore, recall that $\Psi(L_{i^\star_k}, k)$ holds, and every index $j$ in $I_{k-1}$ corresponds to a language $L_j$ that is a proper subset of $L_{i^\star_k}$ and also contains $T^{(k)}_{i^\star_k}$. Recalling \eqref{eqn:inductive-predicate}, this implies that $\Psi(L_j, k-1)$ holds for every $j \in I_{k-1}$.

    We then repeat the above argument on the recursive invocation to \texttt{ListIdentify}$(k-1, I_{k-1}, S_t)$ for $t \ge t^\star_k$. As argued above, the set $I_{k-1}$ contains the index $z$ of the target language, and $\Psi(L_j, k-1)$ holds for every $j \in I_{k-1}$. The requirements of \Cref{claim:L-stabilizes} are again satisfied, implying the existence of a finite time step $t^\star_{k-1}$, such that for every $t \ge \max(t^\star_k, t^\star_{k-1})$, the index $i^\star_{k-1}$ determined in \Cref{line:L} of \texttt{ListIdentify}$(k-1, I_{k-1}, S_t)$ is
    \begin{align*}
        i^\star_{k-1}=\min\left\{i \in I_{k-1}: i \le z \text{ and } L_i \supseteq L_z \text{ and } L_z \supseteq T^{(k-1)}_i\right\}.
    \end{align*}
    Furthermore, $i^\star_{k-1}$ is always included in the output list for $t \ge \max(t^\star_k, t^\star_{k-1})$. Again, if $L_{i^\star_{k-1}}=L_z$, we are done. Otherwise, $i^\star_{k-1} < z$, and $L_{i^\star_{k-1}}$ is a proper superset of $L_z$ for which $L_z \supseteq T^{(k-1)}_{i^\star_{k-1}}$. Since $i^\star_{k-1}$ has stabilized for $t \ge \max(t^\star_k, t^\star_{k-1})$, the set of indices determined in \Cref{line:C} also stabilizes. Denote this set by $I_{k-2}$, and recall that 
    \begin{align*}
        I_{k-2} = \{j \in I_{k-1}: j > i^\star_{k-1} \text{ and } L_j \subsetneq L_{i^\star_{k-1}} \text{ and } T^{(k-1)}_{i^\star_{k-1}} \subseteq L_j\}.
    \end{align*}
    By our previous reasoning, $z \in I_{k-2}$. Furthermore, recall that $\Psi(L_{i^\star_{k-1}}, k-1)$ holds, and every index $j$ in $I_{k-2}$ corresponds to a language $L_j$ that is a proper subset of $L_{i^\star_{k-1}}$ and also contains $T^{(k-1)}_{i^\star_{k-1}}$. Recalling \eqref{eqn:inductive-predicate}, this implies that $\Psi(L_j, k-2)$ holds for every $j \in I_{k-2}$.

    Continuing thus all the way down till $k=1$, we would either have that $L_z \id (i^\star_k,\dots,i^\star_2)$ for every $t \ge \max(t^\star_k,\dots,t^\star_{2})$. Otherwise, we consider the invocation to \texttt{ListIdentify}$(1, I_1, S_t)$ for $t \ge \max(t^\star_k,\dots,t^\star_{2})$. Recall that we ensure that $\Psi(L_j, 1)$ holds for every $j \in I_1$, and that $I_1$ contains the index $z$ of the target language. Invoking \Cref{claim:L-stabilizes} one final time, there exists a finite time $t^\star_{1}$ such that for every $t \ge \max(t^\star_k, \dots, t^\star_1)$, the index $i^\star_1$ determined in \Cref{line:L} of \texttt{ListIdentify}$(1, I_1, S_t)$ is 
    \begin{align*}
        i^\star_{1}=\min\left\{i \in I_{1}: i \le z \text{ and } L_i \supseteq L_z \text{ and } L_z \supseteq T^{(1)}_i\right\}.
    \end{align*}
    Furthermore, $i^\star_{1}$ is now always included in the output list for $t \ge \max(t^\star_k, \dots, t^\star_1)$. We now claim that if we are indeed in this case, $L_{i^\star_1}$ must indeed \textit{equal} $L_z$. Otherwise, if $L_{i^\star_1}$ is a proper superset of $L_z$, it would be the case that $L_z \subsetneq L_{i^\star_i}, T^{(1)}_{i^\star_1} \subseteq L_z$, which is a contradiction to $\Psi(L_{i^\star_1}, 1)$ (see \eqref{eqn:base-predicate}). We conclude that beyond $t \ge \max(t^\star_k,\dots,t^\star_1)$, \texttt{ListIdentify}$(k, \N, S_t)$ identifies $L_z$.
\end{proof}

\section{Lower Bound}
\label{sec:lb}

We next proceed to showing that the $k$-Angluin condition is necessary for $k$-list identification in the limit.

\begin{theorem}[Lower Bound]
    \label{thm:k-angluin-condition-lb}
    Let $\mcC=(L_1,L_2,\dots)$ be a countable collection that does not satisfy the $k$-Angluin condition given in \eqref{eqn:condition}, for $k \ge 1$.
    Then, $\mcC$ cannot be identified in the limit with a list of size $k$.
\end{theorem}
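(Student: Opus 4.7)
My approach is the adversarial diagonalization sketched for $k=2$ in \Cref{sec:overview}, generalized to arbitrary $k$. Given any putative $k$-list identifier $\mcA$, I will adaptively construct an input sequence, depending on $\mcA$, that is a valid enumeration of some language $L^\star \in \mcC$ on which $\mcA$'s output list fails to contain $L^\star$ at infinitely many time steps. The adversary's power comes from the failure of the $k$-Angluin condition: fix $i_0$ with $\neg\Psi(L_{i_0}, k)$. Unrolling the definition, for $j \geq 2$, $\neg\Psi(L, j)$ asserts that every finite $T \subseteq L$ admits an $L' \subsetneq L$ with $T \subseteq L'$ and $\neg\Psi(L', j-1)$; and $\neg\Psi(L, 1)$ asserts the same with no further predicate. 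So starting from $L_{i_0}$, the adversary can always ``descend one level'' on demand, consistent with any finite set of previously seen strings.

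The adversary maintains at each time $t$ a depth $\ell_t \in \{1,\dots,k+1\}$ and a strictly descending chain $L^{(1)}_t \supsetneq \cdots \supsetneq L^{(\ell_t)}_t$ with $\neg\Psi(L^{(j)}_t, k-j+1)$ for $1 \leq j \leq \min(\ell_t, k)$ (level $k+1$ carries no predicate). It initializes $\ell_0 = 1$, $L^{(1)}_0 = L_{i_0}$. At step $t$, the adversary appends $x_{t+1}$, chosen as the leftmost element of a fixed canonical enumeration of $L^{(\ell_t)}_t$ not yet in the input (with a round-robin fallback for finite languages to keep the input infinite), and then queries $\mu_{t+1} = \mcA(x_1,\dots,x_{t+1})$. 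If $L^{(\ell_t)}_t \not\id \mu_{t+1}$, it \emph{stays}; otherwise, letting $j^\star = \min\{j \leq \ell_t : L^{(j)}_t \not\id \mu_{t+1}\}$ if it exists, it \emph{pops} by truncating the chain to length $j^\star$ (keeping the surviving languages unchanged); and if no such $j^\star$ exists, so all chain languages appear in $\mu_{t+1}$, it \emph{pushes} by applying $\neg\Psi(L^{(\ell_t)}_t, k-\ell_t+1)$ to the finite set of inputs so far to obtain a deeper $L^{(\ell_t+1)}_{t+1}$. An easy induction shows that $\{x_1,\ldots,x_{t+1}\} \subseteq L^{(\ell_t)}_t$ at every step, so the $\neg\Psi$-invocation is always valid. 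The size-$k$ constraint is used precisely here: at $\ell_t = k+1$, the $k+1$ distinct chain languages cannot all fit in a size-$k$ list, so $j^\star$ necessarily exists and pushing beyond depth $k+1$ is never attempted.

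The analysis pivots on $\ell^\star := \liminf_{t} \ell_t$, which lies in $\{1,\dots,k+1\}$ since $\ell_t$ is bounded. I will argue: (a) $L^{(\ell^\star)}_t$ stabilizes to some $L^\star \in \mcC$, because $\ell_t \geq \ell^\star$ eventually forbids further pushes to level $\ell^\star$ (which would require first being at level $\ell^\star-1$), freezing the language at that chain position; (b) the input is a valid enumeration of $L^\star$, since early inputs before the final push to $\ell^\star$ are subsumed by the push-consistency requirement at that step, later inputs enumerate from $L^{(\ell_t)}_t \subseteq L^{(\ell^\star)}_t = L^\star$, and the leftmost-unenumerated rule combined with $\ell_t = \ell^\star$ occurring infinitely often exhausts $L^\star$'s canonical enumeration; and (c) $L^\star \not\id \mu_t$ for infinitely many $t$, because for any large $t$ with $\ell_t = \ell^\star$, the preceding transition was either a \emph{stay} at $\ell^\star$ (so $L^{(\ell^\star)}_{t-1} = L^\star \not\id \mu_t$ by the stay condition) or a \emph{pop} from $\ell_{t-1} > \ell^\star$ down to $\ell^\star$ (so $j^\star = \ell^\star$, giving $L^\star \not\id \mu_t$); pushes from $\ell^\star - 1$ are ruled out for large $t$ by the liminf property. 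Combining (a), (b), (c), the adversary produces a valid enumeration of $L^\star$ on which $\mcA$ misses $L^\star$ infinitely often, contradicting $k$-list identification in the limit.

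The main subtlety will be the dual accounting in (b) and (c): the liminf-level target $L^\star$ must simultaneously be fully enumerated \emph{and} be missed by the identifier's list at infinitely many steps. Organizing the analysis around $\liminf \ell_t$ cleanly separates these concerns, but one must verify exhaustively that ``stay'' and ``pop-to-$\ell^\star$'' are the only ways to reach $\ell_t = \ell^\star$ for large $t$, and that each such arrival witnesses the desired list failure --- leverage that ultimately comes from the list-size constraint forcing $j^\star$ to exist once the chain depth hits $k+1$.
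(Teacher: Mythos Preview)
Your proposal is correct and follows essentially the same adversarial diagonalization as the paper: maintain a descending chain of languages carrying the nested $\neg\Psi$ predicates, enumerate the bottom language, and push/pop based on which chain languages the identifier's list contains, with the analysis organized around $\ell^\star = \liminf \ell_t$. The only cosmetic difference is that the paper batches enumeration into phases (a while loop that feeds strings until the bottom language is identified, then decides to push or pop), whereas you operate one string at a time with an explicit ``stay'' transition; your per-step formulation also cleanly absorbs the paper's separately-handled ``while loop never breaks'' case into the general $\liminf$ argument, and your round-robin fallback for finite level-$(k{+}1)$ languages patches a minor point the paper leaves implicit.
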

\begin{proof}
    For any $k$-list identifier $\mcA$, consider the adversarial enumeration strategy specified in \Cref{algo:adversarial-enum}.
    
    \begin{algorithm}[t]
        \SetAlgoLined
        \caption{Adversarial enumeration strategy against $k$-list identifier $\mcA$}\label{algo:adversarial-enum}
        \KwIn{Ordered sequence of indices \chain, input $S$ enumerated so far}
        \KwOut{A valid enumeration $\sigma$ of some language in $\mcC$}
        \SetKwFunction{AdvEnum}{AdvEnum}
        \SetKwProg{Proc}{Procedure}{:}{}
        \Proc{\AdvEnum{$\textup{\chain}, S$}}{
            \label{line:ancestors} Suppose $\chain :=(i_1,\dots,i_\ell)$ \\
            \While{True}{\label{line:enumerate-loop}
                $x \gets$ first string in the enumeration of $L_{i_\ell}$ that hasn't yet appeared in $S$ \\
                $S \gets S \circ x$ \\
                \If{$L_{i_\ell} \id \mcA(S)$}{
                    break
                }
            }
           \If{$\forall j \in [\ell],\, L_{i_j} \id \mcA(S)$}{
                Let $i_{\ell+1} \in \N$ be such that $L_{i_{\ell+1}} \subsetneq L_{i_\ell}$, $L_{i_{\ell+1}} \supseteq S$ and $\neg\Psi(L_{i_{\ell+1}}, k-\ell)$; terminate if $k-\ell < 0$, or if no such $i_{\ell+1}$ exists \label{line:find-proper-subset}\\
                $\chain \gets (i_1,\dots,i_\ell, i_{\ell+1})$ \\
                \texttt{AdvEnum}$(\chain, S)$ \label{line:invoke-proper-subset}\\
            }
            \Else{
                $j \gets \min\{j \in [\ell]: L_{i_j} \not\id \mcA(S)\}$ \label{line:find-ancestor-to-jump}\\
                $\chain \gets (i_1,\dots,i_{j})$ \label{line:trim-ancestors}\\
                \texttt{AdvEnum}$(\chain, S)$ \label{line:invoke-ancestor}\\
            }
        }
    \end{algorithm}
    
    Now, since $\mcC$ does not satisfy the $k$-Angluin condition, there exists $i_1 \in \N$ such that $\neg \Psi(L_{i_1}, k)$ holds. Referring to \eqref{eqn:inductive-predicate}, this means that
    \begin{align*}
        \forall \text{ finite } T \subseteq L_{i_1}, \, \exists i_2 \in \N \text{ such that } L_{i_2} \subsetneq L_{i_1} \text{ and } T \subseteq L_{i_2} \text{ and } \neg \Psi(L_{i_2}, k-1).
    \end{align*}
    Here, we adopt the convention that $\Psi(L_{i_2}, 0)=$ False for every $i_2 \in \N$. 
    
    We now claim that the list-identifier $\mcA$ fails the list identification criterion for $\mcC$ on the infinite sequence $S$ populated upon invoking \texttt{AdvEnum}$((i_1), ())$. We first establish a few invariants of \Cref{algo:adversarial-enum} upon this invocation.

    \paragraph{Invariant 1:} Whenever \texttt{AdvEnum}$(\chain, S)$ is invoked, $\chain$ is a non-empty sequence of indices $(i_1,\dots,i_\ell)$, such that $L_{i_1} \supsetneq L_{i_2} \supsetneq \dots \supsetneq L_{i_\ell}$. Furthermore, we also have that $\neg \Psi(L_{i_1}, k), \neg \Psi(L_{i_2}, k-1),\dots,\neg \Psi(L_{i_\ell}, k-(\ell-1))$, and that $S \subseteq L_{i_j}$ for every $j \in [\ell]$.
    
    To see this, we argue inductively: observe that this is true in the base case when we invoke \texttt{AdvEnum}$((i_1), ())$, either vacuously or by choice of $i_1$. Now, suppose that the invariant is true when \texttt{AdvEnum}$(\chain, S)$ is invoked---we will argue that the invariant holds true at a subsequent invocation. Note that if there is a subsequent invocation at all, it must be the case that the while loop in \Cref{line:enumerate-loop} breaks. Then, if we are in the case that $\forall j \in [\ell],\,L_{i_j} \id \mcA(S)$, we must not terminate in \Cref{line:find-proper-subset}; so we will go on to append $i_{\ell+1}$ that we found to $\chain$, which is guaranteed to satisfy $S \subseteq L_{i_{\ell+1}}$, $L_{i_{\ell+1}} \subsetneq L_{\ell}$, as well as $\neg \Psi(L_{i_{\ell+1}},k-\ell)$ by construction. Thus, the invariant continues to hold true at the next invocation in \Cref{line:invoke-proper-subset}. Otherwise, if we are in the case of the else condition, we trim $\chain$ to be $(i_1,\dots,i_{j})$; then, the invariant continues to hold true at the next invocation in \Cref{line:invoke-ancestor}, due to the fact that all newly added strings to $S$ in the while loop in the present invocation were from $L_{i_\ell} \subseteq L_{i_j}$, together with the inductive hypothesis.

    The invariant above ensures that at the beginning of any invocation to \texttt{AdvEnum}$(\chain, S)$, $S$ is a valid \textit{partial} enumeration of $L_{i_\ell}$, and the while loop in \Cref{line:enumerate-loop} \textit{resumes} enumerating $L_{i_\ell}$ (namely, if the loop goes on endlessly, $S$ gets completed to be a valid \textit{complete} enumeration of $L_{i_\ell}$).

    \paragraph{Invariant 2:} Whenever \texttt{AdvEnum}$(\chain, S)$ is invoked, $|\chain| \le k+1$. 
    
    To see this, note that we begin with $|\chain|=1$ at the very first invocation \texttt{AdvEnum}$((i_1), ())$. Thereafter, assuming no termination/infinite loops, we either append a single language to $\chain$ in the case of the if condition, or we strictly \textit{reduce} the size of $\chain$ in the other case---this is because in \Cref{line:find-ancestor-to-jump}, $j$ is necessarily determined to be strictly smaller than $\ell$, since the while loop broke with $L_{i_\ell} \id \mcA(S)$. So, it suffices to argue that at an invocation of \texttt{AdvEnum}$(\chain, S)$ with $|\chain| = k+1$, the if condition, namely $\forall j \in [\ell],\, L_{i_j} \id \mcA(S)$, can never be satisfied. But this is true simply because $\mcA(S)$ is a list of size at most $k$, and the languages at the indices in $\chain$ are all distinct (e.g., by Invariant 1). This implies that at least one index $i_j \in \chain$ satisfies $L_{i_j} \not\id \mcA(S)$. Thus, we have established the invariant.

    \paragraph{Invariant 3:} The termination condition in \Cref{line:find-proper-subset} is never realized during any invocation.
    
    This follows from Invariant 1 and Invariant 2. The reasoning for Invariant 2 establishes that after the while loop breaks, the if condition $\forall j \in [\ell], \, L_{i_j} \id \mcA(S)$ can be satisfied only if $\ell=|\chain| \le k$; but this ensures that $k-\ell$ is non-negative. Furthermore, from Invariant 1, we also know that $\neg \Psi(L_{i_\ell}, k-(\ell-1))$ holds. By definition, this means that
    \begin{align*}
        \forall \text{ finite } T \subseteq L_{i_\ell}, \, \exists i_{\ell+1} \in \N \text{ such that } L_{i_{\ell+1}} \subsetneq L_{i_\ell} \text{ and } T \subseteq L_{i_{\ell+1}} \text{ and } \neg \Psi(L_{i_{\ell+1}}, k-\ell).
    \end{align*}
    Taking $T=S$ ensures the existence of the required $i_{\ell+1}$, and establishes the invariant.
    
    With these invariants established, we can now proceed towards finishing the proof. Consider first the case where the while loop in \Cref{line:enumerate-loop} never breaks in some invocation to \texttt{AdvEnum}$(\chain, S)$. By Invariant 1, this means that the loop is validly completing the enumeration of the language $L_{i_\ell}$ for $i_{\ell} \in \chain$, and $L_{i_\ell}$ is never identified in the list output by $\mcA$; this makes $\mcA$ an invalid list identifier.

    So, assume that the while loop breaks in every invocation of \texttt{AdvEnum}$(\chain, S)$. By Invariant 3 above, we know that there is never a termination. This means that there is an infinite sequence of invocations of \texttt{AdvEnum}$(\chain, S)$. Let $\chain_1, \chain_2, \dots$ be the corresponding $\chain$ parameters that each \texttt{AdvEnum} is invoked with in this sequence. Define
    \begin{align*}
        \ell^\star := \liminf_{n \to \infty} |\chain_n|.
    \end{align*}
    Note that $\ell^\star$ always exists, and furthermore belongs to the set $\{1,\dots,k+1\}$, since all the numbers in the sequence belong to this finite set. That is, there exists a large enough $n^\star$, such that for every $n \ge n^\star$, $|\chain_n| \ge \ell^\star$. This means that the indices $i_1,\dots,i_{\ell^\star}$ stay fixed in $\chain_n$ for every $n \ge n^\star$. It is also the case that there are are infinitely many $n \ge n^\star$ where the invocation \texttt{AdvEnum}$(\chain_n, S)$ satisfies $|\chain_n|=\ell^\star$. These invocations occur precisely when the previous invocation (which satisfies $|\chain_{n-1}| > \ell^\star$) determines $j=\ell^\star$ in \Cref{line:find-ancestor-to-jump}, meaning that $L_{i_{\ell^\star}} \not\id \mcA(S)$ at that time. Namely, each invocation \texttt{AdvEnum}$(\chain_n, S)$ where $|\chain_n|=\ell^\star$ can be associated to a \textit{witness} of a time step where the list output by the algorithm did not identify $L_{i_{\ell^\star}}$
    
    Finally, in each invocation where $|\chain_n|=\ell^\star$, the while loop resumes enumerating $L_{i_{\ell^\star}}$; in particular, it is adding at least one string in the enumeration of $L_{i_{\ell^\star}}$ that hasn't yet appeared in the input to $S$. This means that over the infinite execution of \Cref{algo:adversarial-enum}, $L_{i_{\ell^\star}}$ gets completely enumerated. Thus, we have argued the existence of a sequence that is a valid enumeration of some language $L_{i_{\ell^\star}} \in \mcC$, for which there are infinitely many time steps where the list output by $\mcA$ does not identify $L_{i_{\ell^\star}}$. Thus, $\mcA$ is not a valid list-identifier. 
\end{proof}

\begin{remark}
    \label{remark:separation-between-list-sizes}
    In light of \Cref{remark:canonical-collections}, we thus have that, for any $k \ge 1$, the collection $\mcC_k = \{\Z\} \cup \{\Z \setminus F: F \subseteq \Z, |F| \le k\}$ is not $k$-list identifiable, but is $(k+1)$-list identifiable in the limit. Furthermore, the collection $\mcC_{\infty} = \{\Z\} \cup \{\Z \setminus F: F \subseteq \Z, |F| < \infty\}$ is not $k$-list identifiable for \textit{any} finite value of $k$. We highlight that this is an example of a simple countable collection of languages, which, by the general positive result of \cite{kleinberg2024language} for countable collections, is \textit{generatable} in the limit, but is not identifiable in the limit, even with a finite list of guesses.
\end{remark}
\section{Stratification of $k$-list Identifiable Collections}
\label{sec:stratification}

In this section, we show that $k$-list identifiable collections admit additional structure: they can be decomposed into $k$ collections, each of which is \textit{identifiable} in the limit in the standard sense!

\thmstratification*
\begin{proof}
    One direction is straightforward: if $\mcC=\cup_{i=1}^k \mcC_i$ where $\mcC_i$ is identifiable in the limit for every $i=\{1,2,\dots,k\}$, then consider $\mcA_1,\dots,\mcA_k$ that identify $\mcC_1,\dots,\mcC_k$ respectively. We can verify that $\mcA$, which assigns $\mcA(x_1,\dots,x_t) := (\mcA_1(x_1,\dots,x_t),\dots,\mcA_k(x_1,\dots,x_t))$ successfully $k$-list identifies $\mcC$ in the limit.

    For the other direction, let $\mcC$ be a $k$-list identifiable collection. By \Cref{thm:k-angluin-condition-lb}, this must mean that $\mcC$ satisfies the $k$-Angluin condition. Namely, we have that $\forall i \in \N: \Psi(L_i, k)$.  We will construct $\mcC_k,\mcC_{k-1},\dots,\mcC_1$ satisfying $\cup_{i=1}^k\mcC_i$ in levels, using the inductive nature of the condition.

    Let $I_k=\N$, and consider the relation $R_k \subseteq I_k \times I_k$ constructed as follows:
    \begin{align}
        R_k := \{(i, j): i,j \in I_k, L_j \subsetneq L_i \text{ and } L_j \supseteq T^{(k)}_i\}.
    \end{align}
    Then, let $J_k=\{j \in I_k: (i,j) \notin R_k\; \forall i \in I_k\}$, and let $\mcC_k = \{L_j: j \in J_k\}$. Note that for any two languages $L_a, L_b$ in $\mcC_k$, if $L_b \subsetneq L_a$, then it must be the case that $L_b \nsupseteq T^{(k)}_a$; otherwise, $(a,b) \in R_k$ and $L_b$ would not have been included in $\mcC_k$. In other words, for every language $L_a \in \mcC_k$, $T^{(k)}_a$ serves as a tell-tale set for $L_a$ within $\mcC_k$, meaning that $\mcC_k$ satisfies Angluin's condition, and is identifiable in the limit.

    Now let $I_{k-1}=I_k \setminus J_k$. Observe that for any $j \in I_{k-1}$, since $L_j$ was not included in $\mcC_k$, it must be the case that there exists $i \in I_k$ for which $L_j \subsetneq L_i$ and $L_j \supseteq T^{(k)}_i$. But since $\Psi(L_i, k)$ holds for every $i \in I_k$, this must mean that $\Psi(L_j, k-1)$ holds. Thus, we have argued that $\forall j \in I_{k-1}$, $\Psi(L_j, k-1)$ holds. Furthermore, as detailed in \Cref{remark:uniqueness-of-tell-tales}, whenever we consider the predicate $\Psi(L_j, k-1)$ to hold, we can assume it to hold with a unique $T^{(k-1)}_j$. So, let us proceed with constructing the relation $R_{k-1} \subseteq I_{k-1} \times I_{k-1}$ along the same lines as above:
    \begin{align}
        R_{k-1} := \{(i, j): i,j \in I_{k-1}, L_j \subsetneq L_i \text{ and } L_j \supseteq T^{(k-1)}_i\}.
    \end{align}
    
    Then let $J_{k-1}=\{j \in I_{k-1}: (i,j) \notin R_{k-1}\; \forall i \in I_{k-1}\}$, and let $\mcC_{k-1}=\{L_j:j \in J_{k-1}\}$. Again, by the same reasoning as in the above paragraph, we have that for every language $L_a \in \mcC_{k-1}$, the set $T^{(k-1)}_a$ serves as a tell-tale for $L_a$ within $\mcC_{k-1}$, and thus, $\mcC_{k-1}$ is identifiable in the limit.

    We proceed similarly, and construct the collections $\mcC_{k-2},\dots,\mcC_1$. By the reasoning above, each of these collections is identifiable in the limit. We will now argue that their union is $\mcC$. For this, we need to show that $J_k \cup J_{k-1}\cup \dots \cup J_1 = \N$.
    
    Observe that $J_k \cup J_{k-1} \cup \dots \cup I_1 = \N$. We will argue that $J_1=I_1$, by arguing that the set $R_{1}$ is empty. Indeed, if $R_1$ contains any $(i,j)$ for $i, j \in I_{1}$, then by definition, it must be the case that $L_j \subsetneq L_i$ and $L_j \supseteq T^{1}_i$. But recall that we maintain that for every $i \in I_{1}$, $\Psi(L_i, 1)$ holds. Thus, recalling the definition of the base predicate \eqref{eqn:base-predicate}, the existence of $L_j \subsetneq L_i$ satisfying $L_j \supseteq T^{(1)}_i$ is a contradiction. We conclude that $J_1=I_1$, which means that $J_k \cup J_{k-1} \cup \dots \cup J_1 = \N$. Thus, we have that $\cup_{i=1}^k \mcC_i = \mcC$, which completes the proof.
\end{proof}

\begin{remark}
    \Cref{thm:stratification} also implies an alternative $k$-list identifier to the one described in \Cref{sec:ub}. Namely, if a collection is $k$-list identifiable in the limit, then it can be decomposed into $k$ collections each of which is identifiable in the limit. So, concatenating the outputs of identifiers run on each of these collections also successfully $k$-list identifies the collection in the limit.
\end{remark}
\section{List Identification Rates}
\label{sec:list-identification-rates}

We now proceed towards establishing rates for list identification in the setting where the input is drawn as an i.i.d. stream from a valid distribution supported on some language in the collection. The three subsequent sections respectively show that: (1) If a collection does not satisfy the $k$-Angluin condition, then it cannot be $k$-list identified at any rate, (2) A collection that satisfies the $k$-Angluin condition can be $k$-list identified at an exponential rate, and (3) A rate faster than an exponential rate is impossible for any (non-trivial) collection.

\subsection{Condition Not Satisfied $\implies$ No Rate}
\label{sec:no-rate}

The main result in this section is the following:

\begin{theorem}[$k$-Angluin Condition Not Satisfied $\implies$ No Rate]
    \label{thm:no-rate}
    Let $\mcC$ be a countable collection of languages that does not satisfy the $k$-Angluin condition \eqref{eqn:condition}. Then $\mcC$ cannot be $k$-list identified at any rate.
\end{theorem}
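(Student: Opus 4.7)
The plan is to prove the contrapositive: assume $\mcC$ admits a $k$-list identifier $\mcA$ achieving some vanishing rate $R(t) \to 0$, and derive a deterministic $k$-list identifier for $\mcC$ on worst-case enumerations, contradicting Theorem \ref{thm:k-angluin-condition-lb} since by hypothesis $\mcC$ fails the $k$-Angluin condition. The derivation chains two reductions: first, \emph{boost} the vanishing-rate identifier to one that succeeds with probability $1$ on i.i.d.\ streams drawn from any valid distribution; second, \emph{derandomize} such a probability-$1$ randomized identifier for i.i.d.\ streams into a deterministic identifier for arbitrary worst-case enumerations, following the Pitt/Angluin-style route sketched in Section \ref{sec:overview}.

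For the boosting step, I would split the length-$t$ prefix of the i.i.d.\ stream into $m = \Theta(\log t)$ disjoint sub-streams of length $\Theta(t/\log t)$, run $\mcA$ on each, and aggregate the $m$ lists by outputting the $k$ indices appearing in the largest number of sub-lists (with a canonical tie-breaking rule). Each sub-list fails to contain the target $L_z$ with probability at most $c_1 R(c_2 \cdot t/\log t) = o(1)$, so a Chernoff bound shows that the fraction of sub-lists covering $L_z$ exceeds $k/(k+1) + \Omega(1)$ except on an event of probability $\exp(-\Omega(\log^2 t))$. A pigeonhole argument then forces $L_z$ into the aggregated top-$k$, and a Borel--Cantelli summation yields probability-$1$ eventual success.

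For the derandomization step, I would invoke the ``probabilistic bridge'' result promised in the overview (to be established in the forthcoming subsections). At a high level, one first uses an Angluin-style transfer: for any enumeration $\sigma$ of some $L_z \in \mcC$, the valid distribution $\mcD_\sigma$ assigning geometrically decaying mass to the elements of $L_z$ in the order of $\sigma$ has the property that every finite prefix of $\sigma$ occurs with positive probability under i.i.d.\ sampling; consequently, a randomized identifier that succeeds with probability $1$ on $\mcD_\sigma$ must succeed on $\sigma$ with probability above any constant threshold (after amplification by independent repetition and top-$k$ aggregation). Then one derandomizes via a weighted top-$k$ aggregation over the computation tree of the probabilistic identifier: at each time $t$, the deterministic identifier outputs the $k$ indices $i$ for which the random-coin mass of the depth-$t$ computation-tree nodes whose output lists contain $i$ is largest. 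Since the sum of inclusion probabilities across all indices is at most $k$ (each list has size $k$), Markov's inequality yields that at most $k$ indices can have inclusion probability strictly above $k/(k+1)$, so any index whose inclusion probability stabilizes above $k/(k+1)$ must lie in the top-$k$ aggregate.

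The main obstacle is the derandomization step: generalizing Pitt's $k=1$ majority vote to a list-valued aggregation is subtle because each random branch outputs a full list of $k$ guesses rather than a single guess, and the critical threshold $k/(k+1)$ forces a careful amplification parameter in Step 1 (mere vanishing rate is too weak; one needs probability strictly above $k/(k+1)$ of eventual inclusion). In addition, the computation tree is infinite and different branches stabilize at different depths, so one must argue uniform-in-time control over the weighted inclusion masses rather than pointwise convergence, which is where most of the technical bookkeeping will sit.
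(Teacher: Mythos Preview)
Your proposal follows essentially the same route as the paper: boost a vanishing-rate identifier to probability-$1$ success on i.i.d.\ streams via batching, top-$k$ aggregation, and Borel--Cantelli (the paper's Theorem~\ref{thm:no-condition-no-rate-quantitative}), then chain the Angluin-style $\mcD_\sigma$ transfer (Theorem~\ref{thm:probabilistic-statistical-equivalence}) with the Pitt-style computation-tree top-$k$ derandomization (Theorem~\ref{thm:probabilistic-deterministic-equivalence}) to obtain a deterministic worst-case identifier, contradicting Theorem~\ref{thm:k-angluin-condition-lb}.

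Two minor corrections. First, your batching parameters are swapped relative to the paper (which uses $t/\log t$ batches of size $\log t$); with only $m=\Theta(\log t)$ batches, Hoeffding yields $e^{-\Theta(m)}=t^{-\Theta(1)}$ rather than the $\exp(-\Omega(\log^2 t))$ you claim, and this is summable only if the hidden constant in $m$ is taken large enough---so either fix the constant or adopt the paper's parametrization. Second, the extra ``amplification by independent repetition'' you flag in Step~2 is unnecessary: Step~1 already delivers probability~$1$ on i.i.d.\ streams, and the $\mcD_\sigma$ simulation (feeding the probabilistic identifier samples drawn from $\mcD_\sigma$ using its own coin flips) transfers this directly to probability~$1 > k/(k+1)$ on the worst-case enumeration $\sigma$, after which the top-$k$ computation-tree vote applies immediately. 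The ``uniform-in-time control'' you anticipate as the main obstacle is handled in the paper by a simple truncation: the infinite sum $\sum_n \Pr[P(L_z,n)] \ge p$ is approximated from below by a finite prefix, and since the truncated events satisfy $P(L_z,n,d) \supseteq P(L_z,n)$, the fraction of depth-$d$ leaves identifying $L_z$ is at least $k/(k+1)+\eps/2$ for all $d$ beyond a fixed level.
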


\subsubsection{List-identifiable Collections $\equiv$ Probabilistically List-identifiable Collections}
\label{sec:probabilistic-deterministic-equivalent}

Our discussion on $k$-list identification so far has been about list identifiers that are \textit{deterministic}, in that the list output at every time step by the identifier is a deterministic function of the input up until that time step. Here, we will show that the family of collections that are deterministically $k$-list identifiable in the limit is precisely the family of collections that are \textit{probabilistically} $k$-list identifiable in the limit. In probabilistic $k$-list identification, we allow the list output by the identifier at every time step to be randomized. Our discussion here is based on the work of \cite{pitt_thesis} on probabilistic inductive inference, although we are required to carefully derive some results differently for the purposes of list identification.

\begin{definition}[Probabilistic List Identifier]
    \label{def:probabilistic-list-identifier}
    A probabilistic list identifier with list size $k$, or a probabilistic $k$-list identifier, is a function $\mcA:\left(U \times \{0,1\}\right)^* \to \N^{k}$, which at any time step $t$, takes as input a finite ordered sequence of strings $x_1,\dots,x_t$ and the outcomes of $t$ independent and uniformly random bits $r_1,\dots,r_t$, and outputs a list of indices $\mu_t=\mcA(x_1,r_1,\dots,x_t,r_t)=(i_1,\dots,i_k)$.
\end{definition}

\begin{definition}[Probabilistic List Identification in the Limit]
    \label{def:probabilistic-list-identification-in-the-limit}
    A probabilistic $k$-list identifier $\mcA$ identifies a countable collection $\mcC=\{L_1,L_2,\dots\}$ in the limit with probability $p$ if for every language $L_z\in \mcC$, and for every enumeration $x_1,x_2,\dots$ of $L_z$, with probability at least $p$ over a random bit sequence $r_1,r_2,\dots$, there exists a finite time $t^\star$ such that for every $t \ge t^\star$, the list $\mu_t=\mcA(x_1,r_1,\dots,x_t,r_t)$ output by $\mcA$ satisfies that $L_z \id \mu_t$ 
\end{definition}

For any given input enumeration $\sigma$, we model the infinite sequence of guesses made by a probabilistic $k$-list identifier $\mcA$ on $\sigma$ as a path on a binary tree, that begins at the root, and descends down infinitely, where at a node at level $t$, the path proceeds towards the left child if the random bit $r_t=0$, and towards the right child otherwise.

\begin{figure}[t]
    \centering
        \includegraphics[scale=0.45]{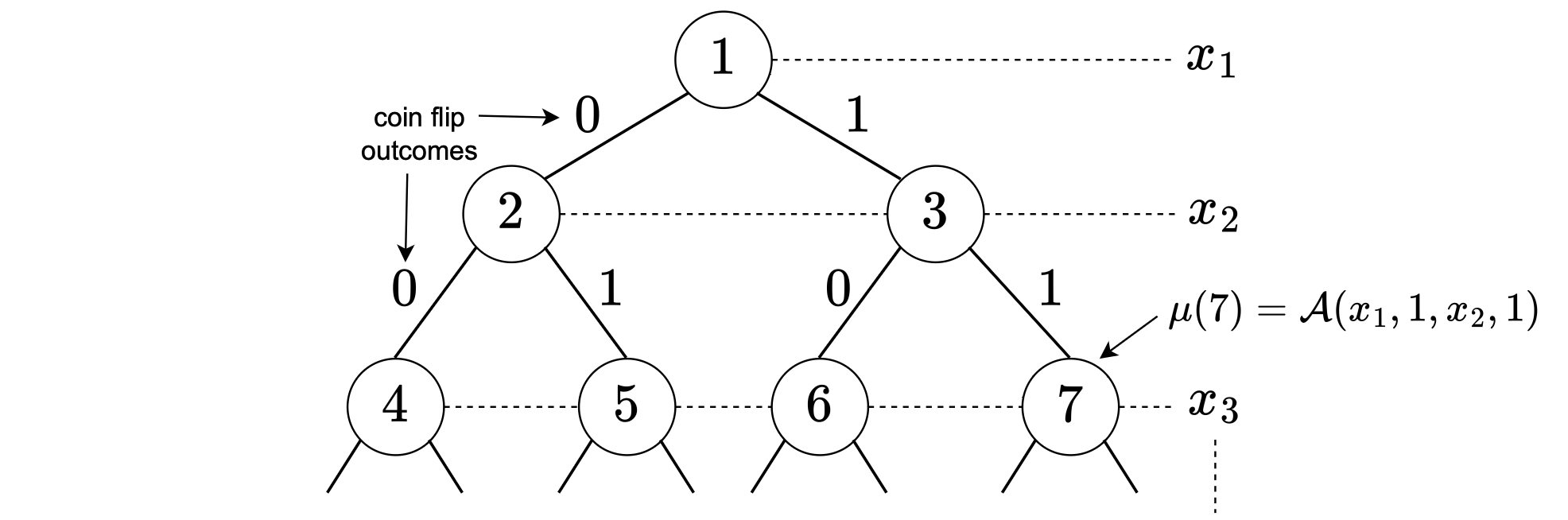}
        \caption{The tree of computation $\mcT_{\mcA, \sigma}$ of a probabilistic $k$-list identifier on an input sequence $\sigma=(x_1,x_2,\dots)$. The labels on the edges represent coin flip outcomes. The root is at level 1. Every node is associated with the list guessed by $\mcA$ at that node. 
        }
        \label{fig:computation-tree}
\end{figure}

\begin{definition}[Tree of Computation]
    \label{def:tree}
    Given a probabilistic $k$-list identifier $\mcA$ and an input enumeration $\sigma=(x_1,x_2,\dots)$, we define the object $\mcT_{\mcA, \sigma}$ to be the infinite binary tree, whose nodes are numbered $1,2,3,\dots$ starting from the root, and proceeding in a breadth-first order from left to right. A node is connected to its left child with an edge labeled 0, and to its right child with an edge labeled 1. A path in the tree is an infinite sequence of adjacent nodes $(n_1,n_2,\dots)$ starting from the root (i.e., $n_1=1$), and proceeding downwards at every step. The level of a node $n$, denoted $d(n)$, is the number of edges on the path from the root to $n$, plus one (so the root is at level 1). We associate level $i$ of the tree with input $x_i$ in the enumeration $\sigma$. We associate every node $n_i$ at level $i$ in a path, for $i > 1$, with the list $\mu(n_i) := \mcA(x_1,r_1,\dots,x_{i-1},r_{i-1})$, where $r_1,\dots,r_{i-1}$ are the edge labels on the path from the root node to $n_i$. Refer to \Cref{fig:computation-tree} for an illustration.
\end{definition}

We will now define what it means for a path in the tree of computation $\mcT_{\mcA, \sigma}$ to \textit{converge}.

\begin{definition}[Convergence in Tree]
    \label{def:convergence-in-tree}
    A path $(n_1,n_2,\dots)$ in $\mcT_{\mcA, \sigma}$ converges to a language $L_z \in \mcC$ if there exists a finite $i$ such that for every $j \ge i$, $L_z \id \mu(n_j)$. Furthermore, we say that a path $(n_1,n_2,\dots)$ converges to language $L_z \in \mcC$ at node $n_i$, for $i \ge 2$, if: (1) for every $j \ge i$, $L_z \id \mu(n_j)$, and (2) either $i=2$, or property (1) does not hold for any $j < i$.
\end{definition}
In other words, a path $(n_1,n_2,\dots)$ converges to $L_z$ at node $n_i$, if $n_i$ corresponds to the smallest $i$, such that the path converges to $L_z$ at node $n_i$. Note that because of this, any path $(n_1,n_2,\dots)$ that converges to $L_z$ converges to it at a \textit{unique} node. We denote by $P(L_z)$ the set of paths in $\mcT_{\mcA, \sigma}$ that converge to language $L_z \in \mcC$. Similarly, for any node $n \in \mcT_{\mcA, \sigma}$, we denote by $P(L_z, n)$ the set of paths in $\mcT_{\mcA, \sigma}$ that pass through $n$ and converge to language $L_z \in \mcC$ at node $n$. We can then observe that
\begin{align}
    P(L_z) = \cup_{n \in \mcT_{\mcA, \sigma}} P(L_z, n),
\end{align}
and by our previous reasoning, the sets $P(L_z, n)$ participating in the union above are all mutually disjoint.

Now consider the random experiment of tracing down a path $(n_1,n_2,\dots)$ in $\mcT_{\mcA, \sigma}$ with respect to a randomly drawn sequence of bits $R:=(r_1,r_2,\dots)$, where from any node $n_i$, we descend down the edge labeled $0$ if $r_i=0$, and down the edge labeled $1$ otherwise. Note that this defines a bijection between paths in $\mcT_{\mcA, \sigma}$ and sequences of random bits. In this case, if $\sigma$ is an enumeration of some language $L_z \in \mcC$, then the set $P(L_z)$ defines the event that $\mcA$ list-identifies $L_z$ in the limit.\footnote{The probability space is indeed well-behaved; for measure-theoretic considerations, we refer the reader to \cite{pitt_thesis}.
} 
Concretely, if $\mcA$ $k$-list identifies $\mcC$ with probability $p$, then we have that
\begin{align}
    \label{eqn:list-identification-probability-summation}
    \Pr_R[P(L_z)] = \sum_{n \in \mcT_{\mcA, \sigma}}\Pr_R \left[P(L_z,n)\right]\ge p.
\end{align}
We will require considering truncated versions of the infinite tree $\mcT_{\mcA, \sigma}$ in order to validly define a deterministic list identifier from a probabilistic list identifier. Accordingly, we require the following definition:
\begin{definition}
    \label{eqn:def-P-L_z-d}
    For any input enumeration $\sigma$, consider the infinite tree $\mcT_{\mcA, \sigma}$. For any node $n \in \mcT_{\mcA, \sigma}$ at level $d(n) \ge 2$, and for any $d \ge d(n)$, we define $P(L_z, n, d)$, for any $L_z \in \mcC$, to be the set of paths $(n_1,n_2,\dots,)$ in $\mcT_{\mcA, \sigma}$, which satisfy:
    \begin{enumerate}
        \item The path passes through node $n$, i.e., $n_i=n$ for $i=d(n)$.
        \item Either $d(n)=2$, or $\mu(n_{d(n)-1}) \not\di L_z$.
        \item Every list guessed from node $n$ onward up until level $d$ in the path identifies $L_z$, i.e., for every $j$ where $d(n) \le j \le d$, $L_z \id \mu(n_{j})$.
    \end{enumerate}
\end{definition}
In other words, $P(L_z, n, d)$ comprises of paths that appear to have converged to $L_z$ at node $n$, if one myopically looks beyond node $n$ only until level $d$. We can observe that for any $d \ge d(n)$,
\begin{align}
    \label{eqn:truncated-paths-superset}
    P(L_z, n) \subseteq P(L_z, n, d).
\end{align}
More consequentially, observe that we can easily compute $\Pr_R[P(L_z, n, d)]$ exactly. For this, we first check if either $d(n)=2$ or $L_z \not\id \mu(n_{d(n)-1})$. If this is not the case, then $\Pr_R[P(L_z, n, d)]=0$. Otherwise, we consider all the descendants of node $n$ at level $d$---there are precisely $2^{d-d(n)}$ many of these. From each descendant $n_d$, we trace up a path to the root---say $(n_d,n_{d-1},\dots,n_{d(n)},\dots,n_2, n_1)$, and check whether $L_z \id \mu(n_i)$ for every $i \ge d(n)$. We count the number of descendants $N_{z, n, d}$ that pass this check. Then, we can see that $P(L_z, n, d)$ is precisely the event that a random path passes through one of these descendants, giving us that $\Pr_R[P(L_z, n, d)]=\frac{N_{z, n,d}}{2^{d-1}}$. This, together with the observation that $P(L_z, n, d)$ and $P(L_z, n', d)$ are disjoint for any $n \neq n'$, imply that 
\begin{align}
    \label{eqn:fraction-of-leaves}
    \sum_{\substack{n \in \mcT_{\mcA, \sigma} \\  d(n) \le d}}\Pr_R[P(L_z, n, d)] = \frac{\left|\{n \in \mcT_{\mcA, \sigma}: d(n)=d, L_z \id \mu(n)\}\right|}{2^{d-1}},
\end{align}
where the RHS above is precisely the fraction of nodes at level $d$ in $\mcT_{\mcA, \sigma}$ that identify $L_z$.

We are now ready to state the main theorem of this subsection which equates deterministic and probabilistic list identification.

\begin{theorem}[Probabilistic List Identification $\equiv$ Deterministic List Identification]
    \label{thm:probabilistic-deterministic-equivalence}
    Let $\mcC=\{L_1,L_2,\dots\}$ be a countable collection of languages.
    \begin{enumerate}
        \item[(1)] If $\mcC$ is deterministically $k$-list identifiable in the limit, then $\mcC$ is probabilistically $k$-list identifiable in the limit with probability $p$, for any $p \in [0,1]$.
        \item[(2)] If $\mcC$ is probabilistically $k$-list identifiable in the limit with probability $p > \frac{k}{k+1}$, then $\mcC$ is deterministically $k$-list identifiable in the limit.
    \end{enumerate}
\end{theorem}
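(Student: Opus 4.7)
The plan is to handle the two directions separately. Part (1) is essentially immediate: given any deterministic $k$-list identifier $\mcA$, define the probabilistic identifier $\mcA'(x_1, r_1, \dots, x_t, r_t) := \mcA(x_1, \dots, x_t)$, which ignores its random bits. This succeeds on every realization of the random bit sequence, hence with probability $1 \ge p$ for every $p \in [0,1]$.

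For part (2), the strategy is to simulate the probabilistic identifier $\mcA$ ``on all random bit sequences in parallel'' via its computation tree, and then aggregate predictions using a top-$k$ scheme. Concretely, given input $(x_1, \dots, x_t)$, the deterministic identifier constructs the finite portion of $\mcT_{\mcA, \sigma}$ down to level $t+1$ (which is well-defined since the list at any node $n$ with $d(n) \le t+1$ depends only on $x_1,\dots,x_{d(n)-1}$ and the edge labels from the root to $n$). For each distinct language $L$ appearing in any list at level $t+1$, it computes
\begin{align*}
f_L^{(t+1)} \;:=\; \Pr_R\bigl[L \id \mu(n_{t+1})\bigr] \;=\; \frac{\bigl|\{n : d(n) = t+1,\; L \id \mu(n)\}\bigr|}{2^t},
\end{align*}
and outputs canonical index representatives for the (at most) $k$ distinct languages with the largest values of $f_L^{(t+1)}$, padding arbitrarily if fewer than $k$ distinct languages are available.

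Correctness rests on two observations. First, fix any enumeration $\sigma$ of the target language $L_z$, and let $Y_t(R) := \Ind[L_z \id \mu(n_{t})]$ denote the indicator of $L_z$ being listed at level $t$ of the random path $R$. Every $R \in P(L_z)$ satisfies $Y_t(R) = 1$ for all large enough $t$, so $\liminf_{t} Y_t \ge \Ind[P(L_z)]$ pointwise almost surely, and Fatou's lemma applied to this bounded nonnegative sequence yields
\begin{align*}
\liminf_{t \to \infty} f_{L_z}^{(t)} \;\ge\; \Pr_R[P(L_z)] \;\ge\; p \;>\; \tfrac{k}{k+1}.
\end{align*}
Second, at any level $t+1$ every node's list identifies at most $k$ distinct languages, so the mass budget $\sum_L f_L^{(t+1)} \le k$ holds. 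Consequently no $k+1$ distinct languages can simultaneously satisfy $f_L^{(t+1)} > \tfrac{k}{k+1}$, for otherwise their total mass would strictly exceed $(k+1)\cdot\tfrac{k}{k+1} = k$. Combining the two observations, for all sufficiently large $t$ the target $L_z$ itself satisfies $f_{L_z}^{(t+1)} > \tfrac{k}{k+1}$, placing it among the at most $k$ languages exceeding this threshold and thus inside the top-$k$ output of the deterministic identifier.

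The main delicacy I anticipate is recognizing that $p > \tfrac{k}{k+1}$ is precisely the slack the pigeonhole step needs: any weaker hypothesis would fail to guarantee $(k+1)p > k$, so the slack appears tight. A secondary technical point is formalizing the probability space on $\{0,1\}^{\N}$ and the measurability of $P(L_z)$ so that Fatou's lemma applies cleanly, but the infrastructure for this is already scaffolded by the paper's bijection between random bit sequences and paths in $\mcT_{\mcA,\sigma}$.
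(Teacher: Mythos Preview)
Your proposal is correct and follows the same high-level strategy as the paper: both parts (1) and (2) match, with (2) built on aggregating the lists at level $t+1$ of the computation tree via a $\topk$ rule over (canonicalized) language frequencies, followed by a pigeonhole/budget argument showing that at most $k$ distinct languages can have frequency exceeding $\tfrac{k}{k+1}$.

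The one genuine difference is in how you establish that $f_{L_z}^{(t)}$ eventually exceeds $\tfrac{k}{k+1}$. The paper builds explicit machinery for this: it decomposes $P(L_z)$ into the disjoint events $P(L_z,n)$ indexed by the node of first convergence, truncates the resulting countable sum to a finite partial sum within $\eps/2$ of $p$, and then introduces the ``myopic'' events $P(L_z,n,d) \supseteq P(L_z,n)$ to relate the truncated sum to the level-$d$ leaf fraction via the identity in \eqref{eqn:fraction-of-leaves}. Your route sidesteps all of this by observing that $f_{L_z}^{(t)} = \E_R[Y_t]$ and applying Fatou's lemma to the bounded indicators $Y_t$, using only that $\liminf_t Y_t \ge \Ind[P(L_z)]$ pointwise. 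This is shorter and more conceptual; the paper's approach is more elementary and self-contained (no appeal to Fatou), and its intermediate objects $P(L_z,n,d)$ make the finite computability of the leaf fractions explicit. Either way the conclusion and the final counting step are identical.
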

\begin{proof}
    The proof of (1) is immediate: if there exists a deterministic $k$-list identifier $\mcA'$ that identifies $\mcC$ in the limit, then the probabilistic $k$-list identifier $\mcA$, which on input $\sigma=(x_1,x_2,\dots)$, disregards any random bits, and simply outputs $\mcA(x_1,r_1,\dots,x_t,r_t)=\mcA'(x_1,\dots,x_t)$, probabilistically $k$-list identifies $\mcC$ in the limit with probability 1.

    Given a probabilistic $k$-list identifier $\mcA$ for $\mcC$ that succeeds with probability larger than $\frac{k}{k+1}$, we will now construct a deterministic $k$-list identifier $\mcA'$ for $\mcC$. We will specify $\mcA'$ by specifying its outputs on any given input enumeration $\sigma := (x_1,x_2,\dots)$. For any index $l \in \N$, let us define $\first_\mcC(l) := \min\{i \in \N: L_i=L_l\}$. That is, $\first_\mcC(l)$ is the index of the first occurrence of language $L_l$ in the collection $\mcC$.

    \renewcommand{\figurename}{Procedure}
    \Crefname{figure}{Procedure}{Procedures} 
    \begin{figure}[H]
        \begin{framed}
            \centering \textbf{Deterministic List Identifier $\mcA'$ from Probabilistic List Identifier $\mcA$} \\
            \begin{flushleft}  
                At time step $t$ on input $\sigma := (x_1,x_2,\dots)$:
            \end{flushleft}
            \begin{enumerate}
                \item[(a)] Let $\mu_1,\dots,\mu_{2^{t}}$ denote the $k$-lists output at all the nodes at level $t+1$ in $\mcT_{\mcA, \sigma}$.
                \item[(b)] Let $S$ be the \textit{multiset} of $\first_\mcC(l)$ values for every index $l$ in these lists. That is,
                \begin{align*}
                    S := \mathrm{multiset}\{\first_\mcC(l): l \in \mu_i, 1 \le i \le 2^{t}\}.
                \end{align*}
                \item[(c)] Return $\mcA'(x_1,\dots,x_t) := \topk(S)$, where $\topk(S)$ returns a list of the $k$ most frequently occurring indices in $S$.
            \end{enumerate}
        \end{framed}
        \caption{Obtaining a deterministic list identifier from a probabilistic list identifier.}
        \label{proc:probabilistic-to-deterministic}
    \end{figure}
    Let $\mcA$ be a probabilistic $k$-list identifier that identifies $\mcC$ in the limit with probability $p > \frac{k}{k+1}$; in particular, let $p=\frac{k}{k+1}+\eps$ for some $\eps > 0$. Fix any language $L_z \in \mcC$, and any enumeration $\sigma$ of it. From \eqref{eqn:list-identification-probability-summation}, we have that
    \begin{align*}
        \sum_{n \in \mcT_{\mcA, \sigma}}\Pr_R[P(L_z, n)] \ge \frac{k}{k+1}+\eps.
    \end{align*}
    Because all the summands in the summation above are non-negative, there exists a finite $d^\star \in \N$ such that for all $d \ge d^\star$,
    \begin{align*}
        \sum_{\substack{n \in \mcT_{\mcA, \sigma} \\ d(n) \le d}} \Pr_R[P(L_z, n)] \ge \frac{k}{k+1}+\frac{\eps}{2}.
    \end{align*}
    From \eqref{eqn:truncated-paths-superset}, this implies that for all $d \ge d^\star$,
    \begin{align}
        \sum_{\substack{n \in \mcT_{\mcA, \sigma} \\ d(n) \le d}} \Pr_R[P(L_z, n, d)] \ge \frac{k}{k+1}+\frac{\eps}{2}.
    \end{align}
    From \eqref{eqn:fraction-of-leaves}, this implies that for all $d \ge d^\star$, the fraction of nodes in $\mcT_{\mcA, \sigma}$ at level $d$ that identify $L_z$ is at least $\frac{k}{k+1}+\frac{\eps}{2}$.
    
    So, consider any time step $t \ge d^\star-1$. By the reasoning above, we have that of the $2^{t}$ lists $\mu_1,\dots,\mu_{2^{t}}$ considered by the deterministic list identifier $\mcA'$ in Step (a) of \Cref{proc:probabilistic-to-deterministic}, at least a $\frac{k}{k+1}+\frac{\eps}{2}$ fraction identify $L_z$. This means that the multiset $S$ constructed in Step (b) contains at least $2^{t} \cdot \left(\frac{k}{k+1}+\frac{\eps}{2}\right)$ copies of $\first_\mcC(z)$. We then claim that $\topk(S)$ necessarily contains $\first_\mcC(z)$. If not, there are $k$ distinct indices other than $\first_\mcC(z)$, each of which occur at least $2^{t} \cdot \left(\frac{k}{k+1}+\frac{\eps}{2}\right)$ times in $S$. Since $S$ also contains $2^{t} \cdot \left(\frac{k}{k+1}+\frac{\eps}{2}\right)$ copies of $\first_\mcC(z)$, this would mean that there are a total of at least $2^t \cdot \left(k + \frac{\eps(k+1)}{2}\right)$ numbers in $S$, which is not possible since $S$ is a concatenation of at most $2^t$ lists of size $\le k$. Thus, we have argued that $\topk(S) = \mcA'(x_1,\dots,x_t)$ contains $\first_\mcC(z)$ for every $t \ge d^\star -1$, implying that $\mcA'$ (deterministically) list-identifies $L_z$ in the limit on the input enumeration $\sigma$. Since the language $L_z$ and its enumeration $\sigma$ were arbitrarily chosen, we conclude that $\mcA'$ list-identifies $\mcC$ in the limit.
\end{proof}

\subsubsection{Probabilistic List Identification $\equiv$ List Identification on Infinite Draws}
\label{sec:probabilistic-statistical-equivalent}

In both the definitions  \Cref{def:list-identification-in-the-limit} and \Cref{def:probabilistic-list-identification-in-the-limit}, we considered the enumeration $\sigma=(x_i)_{i \in \N}$ of the target language $L_z$ to be any fixed worst-case enumeration. In this subsection, we consider the setting where the enumeration $(x_i)_{i \in \N}$ is itself an infinite i.i.d.\ sequence drawn from a distribution $\mcD$ that is valid for the language $L_z$ (and the identifier is deterministic). Since $\mcD$ is valid for $L_z$, meaning that its support is precisely $L_z$, every member of $L_z$ will eventually show up in an infinite draw from $\mcD$, and there are no spurious strings in it. More formally, Proposition 5.2 in \cite{kalavasis2025limits} shows that such an infinite draw is an enumeration of $L_z$ with probability 1.

\begin{definition}[List Identification in the Limit on Infinite Draws]
    \label{def:list-identification-in-the-limit-on-infinite-draws}
    A deterministic $k$-list identifier $\mcA'$ identifies a countable collection $\mcC=\{L_1,L_2,\dots\}$ in the limit with probability $p$ on infinite draws if for every language $L_z\in \mcC$, and for every distribution $\mcD$ that is valid for $L_z$, with probability at least $p$ over $(x_i)_{i \in \N} \sim \mcD^\infty$, there exists a finite time $t^\star$ such that for every $t \ge t^\star$, the list $\mu_t=\mcA'(x_1,\dots,x_t)$ output by $\mcA'$ satisfies that $L_z \id \mu_t$ 
\end{definition}

The main theorem of this subsection equates list identification in the limit on infinite draws and probabilistic list identification in the limit. The proof of this theorem follows the proof structure of Theorem 9 in \cite{angluin1988identifying}.

\begin{theorem}[Probabilistic List Identification $\equiv$ List Identification on Infinite Draws]
    \label{thm:probabilistic-statistical-equivalence}
    Let $\mcC=\{L_1,L_2,\dots\}$ be a countable collection of languages, and fix $p \in [0,1]$. Then, $\mcC$ is $k$-list identifiable in the limit with probability $p$ on infinite draws if and only if $\mcC$ is probabilistically $k$-list identifiable in the limit with probability $p$.
\end{theorem}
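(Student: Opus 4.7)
I would prove both implications by explicit simulation constructions that preserve the success probability $p$ exactly, following the template of Angluin's Theorem 9 in \citep{angluin1988identifying}. Each direction turns an identifier of one type into an identifier of the other by exploiting the ``randomness'' available on the other side of the equivalence.

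\paragraph{Infinite draws $\Rightarrow$ probabilistic (easier direction).} Let $\mcA'$ be a deterministic $k$-list identifier succeeding on i.i.d.\ draws with probability at least $p$. Given any enumeration $\sigma = (x_1, x_2, \ldots)$ of $L_z$, the probabilistic $\mcA$ I would construct uses its random bits to generate an i.i.d.\ sequence $Z_1, Z_2, \ldots$ from the distribution $\Pr[Z = i] = 2^{-i}$ on $\N$, and sets $y_j := x_{Z_j}$. Because every element of $L_z$ appears in $\sigma$, the $y_j$ are i.i.d.\ from the distribution $\mcD^{\star}(x) := \sum_{i : x_i = x} 2^{-i}$, which is valid for $L_z$. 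At time $t$, $\mcA$ outputs $\mcA'(y_1, \ldots, y_{s(t)})$, where $s(t)$ is the largest $s$ such that $y_1, \ldots, y_s$ are all defined from the first $t$ stream elements and the first $t$ random bits. Since each $Z_j$ is a.s.\ finite, $s(t) \to \infty$ almost surely, so invoking the i.i.d.\ guarantee for $\mcA'$ on $\mcD^{\star}$ yields $\Pr[\mcA \text{ succeeds on } \sigma] \ge p$ pointwise in $\sigma$.

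\paragraph{Probabilistic $\Rightarrow$ infinite draws (the main direction).} Let $\mcA$ be probabilistic, succeeding with probability at least $p$ on every enumeration. The deterministic $\mcA'$ I would build uses the i.i.d.\ stream itself as its source of randomness. Split the stream: let $y_j := x_{2j-1}$ form the ``text'' fed to $\mcA$, and from disjoint even-indexed pairs $(x_{4j-2}, x_{4j})$ extract uniform bits $b_j$ via von Neumann unbiasing (fix any total order on $U$; from a pair, output $0$, $1$, or \emph{skip} according to whether the pair is $<$, $>$, or $=$). Define $\mcA'(x_1, \ldots, x_t) := \mcA(y_1, b_1, \ldots, y_{s(t)}, b_{s(t)})$, where $s(t)$ is the number of aligned $(y, b)$ pairs available at time $t$. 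When $|L_z| \ge 2$, each pair has positive probability of producing a bit, so $s(t) \to \infty$ almost surely; symmetry of von Neumann's extractor on i.i.d.\ pairs makes $(b_j)$ a.s.\ i.i.d.\ $\mathrm{Uniform}(\{0,1\})$, independent of the i.i.d.\ sequence $(y_j)$, which is itself a.s.\ an enumeration of $L_z$. Conditioning on $(y_j)$ and invoking $\mcA$'s probabilistic guarantee pointwise on each enumeration yields $\Pr_{x \sim \mcD^\infty}[\mcA' \text{ succeeds}] \ge p$ by Fubini.

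\paragraph{Main obstacle.} The degenerate case $|L_z| = 1$ requires separate handling, since every pair of even-indexed samples ties, no bits are ever extracted, and the von Neumann simulation never produces input to $\mcA$. I would hard-wire a fallback into $\mcA'$: as long as every string seen so far equals a single value $x$, output a fixed list of $k$ indices of singleton languages $\{x\} \in \mcC$ (padded with defaults if fewer than $k$ such exist), and only switch to the von Neumann construction once a second distinct string appears in the stream. For a singleton target $L_z = \{x\}$, the unique possible input stream is $x, x, x, \ldots$ and the fallback identifies $L_z$ deterministically; for any non-singleton target, a second distinct string appears in finite time almost surely, so the switch occurs a.s.\ and preserves the asymptotic analysis above.
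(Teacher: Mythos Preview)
Your proposal is correct and follows essentially the same approach as the paper: both directions use the same simulation ideas (sampling from the enumeration-weighted distribution $\sum_i 2^{-i}\delta_{x_i}$ for one direction, and splitting the i.i.d.\ stream into an odd-indexed text plus even-indexed von Neumann bit extraction with a singleton fallback for the other). The only inessential difference is that you extract bits via a fixed total order on $U$, whereas the paper restricts to the first two distinct observed values $a,b$ and waits for pairs equal to $(a,b)$ or $(b,a)$; both yield i.i.d.\ unbiased bits independent of the odd-indexed text.
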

\begin{proof}
    First, let $\mcA'$ be a deterministic $k$-list identifier that identifies $\mcC$ in the limit with probability $p$ on infinite draws. We will construct a probabilistic $k$-list identifier $\mcA$ for $\mcC$ that succeeds with probability $p$. Fix any language $L_z$ and any enumeration $\sigma := (x_i)_{i \in \N}$ of it. Define a distribution $\mcD_\sigma$ supported over $L_z$ as follows:
    \begin{align*}
        \mcD_\sigma(x) = \begin{cases}
            \sum_{i \in \N: x_i=x}\frac{1}{2^i} & \text{if } x \in L_z, \\
            0 & \text{otherwise}.
        \end{cases}
    \end{align*}
    Since $\sigma$ is a valid enumeration of $L_z$, we can verify that $\mcD_\sigma$ is a valid distribution for $L_z$. Then, consider the probabilistic $k$-list identifier $\mcA$ which, given $\sigma$ and an infinite random bit sequence $R := (r_i)_{i \in \N}$ as input, operates as follows. Using $R$, $\mcA$ constructs an infinite draw $\sigma' := (x'_i)_{i \in \N} \sim \mcD_\sigma$ for $\mcD_\sigma$ defined above.\footnote{For example, one way to do this is as follows: to generate $x' \sim \mcD_\sigma$, $\mcA$ scans (the rest of) $R$ ahead and finds the position $j$ of the first 1 it encounters (here, we imagine the first bit in the rest of $R$ to be at position 1). $\mcA$ then sets $x'$ to be $x_j$ in $\sigma$. We can verify that over the randomness of $R$, $x'$ generated thus is distributed according to $\mcD_\sigma$.} It then feeds $\sigma'$ to $\mcA'$, and at each time step, outputs the list output by $\mcA'$. Since $\mcD_\sigma$ is a valid distribution for $L_z$, we have that with probability at least $p$ over $\sigma'$, $\mcA'$ will converge to outputting a list that identifies $L_z$. Since the draw of $\sigma'$ is induced by the draw of $R$, and $\mcA$ returns the output of $\mcA'$, we conclude that with probability at least $p$ over $R$, $\mcA$ converges to outputting a list that identifies $L_z$ on the enumeration $\sigma$.

    Now, let $\mcA$ be a probabilistic $k$-list identifier that identifies $\mcC$ in the limit with probability $p$. We will construct a deterministic $k$-list identifier $\mcA'$ that identifies $\mcC$ in the limit with probability $p$ on infinite draws. Fix any language $L_z \in \mcC$, and any distribution $\mcD$ that is valid for $L_z$. Consider $\mcA'$, which on input $(x_i)_{i \in \N} \sim \mcD^\infty$, operates as follows. Suppose $x_1=a$: $\mcA'$ scans increasing prefixes $(x_1,\dots,x_t)$ of the input up until the first time it sees some $x_t \neq a$. Up until this time, $\mcA'$ keeps outputting the singleton list $\{L_j\}$, where $j$ is the index of any language in $\mcC$ for which $L_j = \{a\}$; if no such language exists $\mcA'$ keeps outputting $\{L_1\}$ arbitrarily until it arrives at $x_t$. Observe that if $L_z$ were a singleton set, then the only valid distribution for $L_z$ is a point mass on the single element in $L_z$, and $\mcA'$ identifies $L_z$ with probability 1 from the first time step.
    
    Otherwise, $\mcA'$ reaches an $x_t =b\neq a$. Note that this can only happen if $L_z$ has at least two elements. Furthermore, in this case, since $\mcD$ is supported on all of $L_z$, such an $x_t$ will be encountered with probability 1. $\mcA'$ then feeds $(x_{2i-1})_{i \in \N}$ to $\mcA$ as input. Since $(x_{2i-1})_{i \in \N}$ is also an infinite draw from $\mcD$, it is a valid enumeration of $L_z$ with probability 1. $\mcA'$ constructs a random bit sequence $(r_i)_{i \in \N}$ for $\mcA$, using the randomness in the input, in the following manner: denote $t=j_1$, where $t$ was the first time where $x_t = b$.
    In order to determine $r_1$, $\mcA'$ scans pairs $(x_{2(j_1+1)}, x_{2(j_1+2)}), (x_{2(j_1+3)}, x_{2(j_1+4)}),\dots$ up until the time it sees a pair which is either $(a,b)$ or $(b,a)$---say this happens at $(x_{2(j_1+m)}, x_{2(j_1+m+1)})$. This process also terminates with probability 1, and once it terminates, $r_1$ is set to $1$ if the process terminates with the pair $(a,b)$ and $0$ if it terminates with $(b,a)$. Thereafter, $r_2$ is determined similarly by setting $j_2=j_1+m+1$, and starting to scan pairs $(x_{2(j_2+1)}, x_{2(j_2+2)}), (x_{2(j_2+3)}, x_{2(j_2+4)}),\dots$ and so on.

    Since $(x_i)_{i \in \N}$ is an infinite i.i.d. draw from $\mcD$, conditioned on any $(x_{2i-1})_{i \in \N}$, the sequence $(x_{2i})_{i \in \N}$, is also an infinite i.i.d. draw from $\mcD$. So, consider the distribution of $(r_i)_{i \in \N}$ induced by the randomness in $(x_{2i})_{i \in \N} \sim \mcD^\infty$. Since each $r_i$ is determined using independent strings from $\mcD$, the bits are independent. Furthermore, conditioned on the first two distinct strings in $(x_{2i})_{i \in \N}$ being any $(a,b)$, the next pair in the sequence that is one of $(a,b)$ or $(b,a)$ is equally likely to be either. So, every $r_i$ is either 0 or 1 with probability $1/2$ each. Thus, we have argued that the distribution of $(r_i)_{i \in \N}$ induced by $(x_{2i})_{i \in \N} \sim \mcD^\infty$ is precisely that of an infinite sequence of independent random bits. 

    So, $\mcA'$ feeds $(x_{2i-1})_{i \in \N}$ to the probabilistic $k$-list identifier $\mcA$ as input, along with the bit sequence $(r_i)_{i \in \N}$ that it constructs from $(x_{2i})_{i \in \N}$. At each time step, it outputs precisely the list that is output by $\mcA$. Observe that the outputs of $\mcA'$ are deterministic. It remains to argue that $\mcA'$ thus constructed identifies $L_z$ with probability at least $p$ over the draw of $(x_i)_{i \in \N}$. To see this, note that
    \begin{align*}
        &\Pr_{(x_i)_{i \in \N} \sim \mcD^\infty}\left[\text{$\mcA'$ list-identifies $L_z$ in the limit on input $(x_i)_{i \in \N}$}\right] \\
        =& \E_{(x_{2i-1})_{i \in \N} \sim \mcD^\infty}\left[ \Pr_{(x_{2i})_{i \in \N} \sim \mcD^\infty}\left[\text{$\mcA'$ list-identifies $L_z$ in the limit on input $(x_i)_{i \in \N}$} ~\Big|~ (x_{2i-1})_{i \in \N} \right]\right] \\
        =& \E_{(x_{2i-1})_{i \in \N} \sim \mcD^\infty}\left[ \Pr_{(r_i)_{i \in \N}}\left[\text{$\mcA$ list-identifies $L_z$ in the limit on input $(x_{2i-1})_{i \in \N}$} ~\Big|~ (x_{2i-1})_{i \in \N} \right]\right] \\
        \ge &\E_{(x_{2i-1})_{i \in \N} \sim \mcD^\infty}\left [ p\right] = p.
    \end{align*}
    The inequality above follows from our above reasoning, which establishes that the induced distribution of $(r_i)_{i \in \N}$ conditioned on any $(x_{2i-1})_{i \in \N}$ is that of an infinite sequence of independent random bits, together with the guarantee that $\mcA$, by virtue of being a probabilistic $k$-list identifier, identifies $L_z$ in the limit with probability at least $p$ over the randomness in $(r_i)_{i \in \N}$ on the (fixed) enumeration $(x_{2i-1})_{i \in \N}$ of $L_z$.
\end{proof}

\subsubsection{Condition Not Satisfied $\implies$ No Rate}
\label{sec:condition-not-satisfied-implies-no-rate}

As a corollary of \Cref{thm:k-angluin-condition-lb,thm:probabilistic-deterministic-equivalence,thm:probabilistic-statistical-equivalence}, we have the following:

\begin{corollary}
    \label{corollary:probabilistic-statistical}
    Let $\mcC$ be a countable collection of languages that does not satisfy the $k$-Angluin condition \eqref{eqn:condition}. Then, for every (deterministic) $k$-list identifier $\mcA$, there exists a language $L_z \in \mcC$ and a distribution $\mcD$ that is valid for $L_z$, such that $\mcA$ fails to $k$-list identify $L_z$ in the limit on $(x_i)_{i \in \N} \sim \mcD^\infty$ with probability at least $\frac{1}{k+1}$. In other words, 
    \begin{align*}
        \Pr_{(x_i)_{i \in \N} \sim \mcD^\infty}\left[\exists i_1 < i_2 < i_3 < \dots : \forall j \in \N, \mcA(x_1,x_2,\dots,x_{i_j}) \not\di L_z \right] \ge \frac{1}{k+1}.
    \end{align*}
\end{corollary}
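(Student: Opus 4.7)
The plan is to derive this as a direct chain-of-implications contradiction, using the three previously established results: the lower bound \Cref{thm:k-angluin-condition-lb}, the probabilistic–deterministic equivalence \Cref{thm:probabilistic-deterministic-equivalence}, and the probabilistic–statistical equivalence \Cref{thm:probabilistic-statistical-equivalence}. The corollary is essentially the contrapositive of chaining these together, so no new combinatorial work should be required.

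Concretely, I would argue by contradiction. Suppose for some deterministic $k$-list identifier $\mcA$, and for every $L_z \in \mcC$ together with every distribution $\mcD$ valid for $L_z$, we have
\begin{align*}
    \Pr_{(x_i)_{i \in \N} \sim \mcD^\infty}[\mcA \text{ fails to $k$-list identify } L_z \text{ in the limit on } (x_i)_{i \in \N}] < \frac{1}{k+1}.
\end{align*}
Equivalently, $\mcA$ $k$-list identifies $\mcC$ in the limit on infinite draws with probability strictly greater than $p := \frac{k}{k+1}$ in the sense of \Cref{def:list-identification-in-the-limit-on-infinite-draws}. By the ``only if'' direction of \Cref{thm:probabilistic-statistical-equivalence}, this yields a probabilistic $k$-list identifier for $\mcC$ that succeeds with probability strictly greater than $\frac{k}{k+1}$. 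Then by part (2) of \Cref{thm:probabilistic-deterministic-equivalence}, $\mcC$ is deterministically $k$-list identifiable in the limit. But this contradicts \Cref{thm:k-angluin-condition-lb}, which asserts that any $\mcC$ failing the $k$-Angluin condition cannot be deterministically $k$-list identified in the limit. Hence such an $\mcA$ cannot exist, giving the desired $L_z, \mcD$ for every $\mcA$.

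The only small piece of bookkeeping is to check that the ``in other words'' reformulation at the end of the corollary is equivalent to ``$\mcA$ fails to $k$-list identify $L_z$ in the limit on $(x_i)_{i \in \N}$.'' This is immediate from \Cref{def:list-identification-in-the-limit}: a deterministic identifier fails to list-identify $L_z$ in the limit on a sequence precisely when there is no finite $t^\star$ after which every output list contains the identity of $L_z$, which in turn is equivalent to the existence of an infinite increasing sequence of time steps $i_1 < i_2 < \dots$ at which $\mcA(x_1,\dots,x_{i_j}) \not\di L_z$. Plugging this equivalence into the probability bound above yields the stated conclusion verbatim.

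I do not anticipate a substantive obstacle here, since all the heavy lifting is done by the three cited results; the only subtlety to keep in mind is the strict inequality in the threshold of \Cref{thm:probabilistic-deterministic-equivalence} (the hypothesis requires probability $> \frac{k}{k+1}$, not $\ge$), which is precisely why the corollary states the failure probability as ``at least $\frac{1}{k+1}$'' rather than ``strictly greater than $\frac{1}{k+1}$.'' One passes from strict to non-strict at the right point by negating the contradictory hypothesis that failure probability is strictly less than $\frac{1}{k+1}$.
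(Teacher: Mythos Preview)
Your proposal is correct and follows exactly the approach the paper intends: the corollary is stated immediately after ``As a corollary of \Cref{thm:k-angluin-condition-lb,thm:probabilistic-deterministic-equivalence,thm:probabilistic-statistical-equivalence}'' with no further proof, and your contrapositive chaining of these three results is precisely the argument being invoked. One small technical point worth noting is that the negated hypothesis gives only a \emph{pointwise} strict inequality (success probability $> \tfrac{k}{k+1}$ for each $L_z,\mcD$) rather than a uniform $p > \tfrac{k}{k+1}$; however, inspecting the proofs of \Cref{thm:probabilistic-statistical-equivalence} and \Cref{thm:probabilistic-deterministic-equivalence} shows that both constructions are independent of $p$ and the analyses proceed enumeration-by-enumeration, so the pointwise bound suffices.
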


Similar to \cite{kalavasis2025limits}, we now have all the ingredients to establish \Cref{thm:no-rate}, which shows that any collection that does not satisfy the $k$-Angluin condition \eqref{eqn:condition} cannot be $k$-list identified at any rate. More precisely, we will show that:

\begin{theorem}
    \label{thm:no-condition-no-rate-quantitative}
    Let $\mcC$ be a countable collection of languages that does not satisfy the $k$-Angluin condition \eqref{eqn:condition}. Then, for every (deterministic) $k$-list identifier $\mcA$, there exists a language $L_z \in \mcC$ and a distribution $\mcD$ that is valid for $L_z$, such that
    \begin{align}
        \limsup_{t \to \infty} \Pr_{x_1,\dots,x_t \sim \mcD^t}\left[\mcA(x_1,\dots,x_t) \not\di L_z \right] \ge \frac{1}{k+1}.
    \end{align}
\end{theorem}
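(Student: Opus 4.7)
The plan is to argue by contradiction and produce a deterministic $k$-list identifier for $\mcC$ that succeeds on every valid distribution with probability $1$; this will contradict Theorem \ref{thm:k-angluin-condition-lb} via the equivalence chain established in Sections \ref{sec:probabilistic-deterministic-equivalent} and \ref{sec:probabilistic-statistical-equivalent}. Specifically, suppose there is a deterministic $k$-list identifier $\mcA$ such that for every $L_z \in \mcC$ and every distribution $\mcD$ valid for $L_z$, $\limsup_{t \to \infty} \Pr_{\mcD^t}[\mcA(x_1,\dots,x_t) \not\di L_z] < \tfrac{1}{k+1}$. From such an $\mcA$ I will construct a new deterministic $k$-list identifier $\mcA'$ that identifies every such $L_z$ in the limit on $\mcD^\infty$ with probability $1$. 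By Theorem \ref{thm:probabilistic-statistical-equivalence}, $\mcA'$ corresponds to a probabilistic $k$-list identifier succeeding in the limit on every worst-case enumeration with probability $1$, and then Theorem \ref{thm:probabilistic-deterministic-equivalence}(2) (since $1 > \tfrac{k}{k+1}$) converts this into a deterministic $k$-list identifier for $\mcC$ in the limit, contradicting Theorem \ref{thm:k-angluin-condition-lb}.

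The boosted identifier $\mcA'$ is defined by sample splitting followed by a $\topk$ aggregation, along the lines of Procedure \ref{proc:probabilistic-to-deterministic}. On input $(x_1,\dots,x_t)$, set $m := m(t) := \max(1, \lceil \log^2 t \rceil)$, partition the input into $m$ blocks $S_1,\dots,S_m$ of equal size $\lfloor t/m \rfloor$ (discarding any remainder), run $\mcA$ on each block to obtain $k$-lists $\mu_j := \mcA(S_j)$, form the multiset $S := \mathrm{multiset}\{\first_\mcC(l) : l \in \mu_j,\, 1 \le j \le m\}$, and output $\topk(S)$. Fix any $L_z \in \mcC$, any $\mcD$ valid for $L_z$, and set $\delta := \tfrac{1}{k+1} - \limsup_t \Pr_{\mcD^t}[\mcA(x_1,\dots,x_t) \not\di L_z] > 0$. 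Since $\lfloor t/m(t) \rfloor \to \infty$ and the blocks are i.i.d.\ samples from $\mcD$, for all sufficiently large $t$ the indicators $\Ind[\mu_j \di L_z]$ are i.i.d.\ Bernoulli with mean at least $\tfrac{k}{k+1} + \tfrac{\delta}{2}$. Hoeffding's inequality gives
$$\Pr\left[\frac{1}{m}\sum_{j=1}^{m}\Ind[\mu_j \di L_z] \le \frac{k}{k+1} + \frac{\delta}{4}\right] \le \exp(-m\delta^2/8),$$
and the pigeonhole used in the proof of Theorem \ref{thm:probabilistic-deterministic-equivalence} ensures that whenever strictly more than $m \cdot \tfrac{k}{k+1}$ of the $\mu_j$'s identify $L_z$, the index $\first_\mcC(z)$ must lie in $\topk(S)$: otherwise $k+1$ distinct indices would each have frequency strictly greater than $m \cdot \tfrac{k}{k+1}$ in $S$, forcing $|S| > mk$, contradicting $|\mu_j| \le k$ for every $j$.

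Combining the two estimates, $\Pr_{\mcD^t}[\mcA'(x_1,\dots,x_t) \not\di L_z] \le \exp(-m(t)\delta^2/8)$ for all sufficiently large $t$, which under the choice $m(t) = \lceil \log^2 t \rceil$ is eventually dominated by $t^{-2}$ and is therefore summable. The first Borel--Cantelli lemma gives $\Pr_{\mcD^\infty}[\mcA'(x_1,\dots,x_t) \not\di L_z \text{ for infinitely many } t] = 0$, so $\mcA'$ identifies $L_z$ in the limit on $\mcD^\infty$ with probability $1$; since $(L_z, \mcD)$ was arbitrary, $\mcA'$ attains $k$-list identification of $\mcC$ in the limit on infinite draws with probability $1$ in the sense of Definition \ref{def:list-identification-in-the-limit-on-infinite-draws}, completing the contradiction chain. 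The main technical subtlety is that the gap $\delta$ depends on $(L_z, \mcD)$, so one must pick a single block count $m(t)$ that works simultaneously for all pairs; the poly-logarithmic growth $m(t) = \lceil \log^2 t \rceil$ does this, because for every fixed $\delta > 0$ the bound $\exp(-\log^2(t) \delta^2 / 8)$ is summable in $t$, while $\lfloor t/m(t) \rfloor \to \infty$ ensures that the per-block failure bound $\Pr[\mu_j \not\di L_z] \le \tfrac{1}{k+1} - \tfrac{\delta}{2}$ eventually kicks in regardless of $(L_z, \mcD)$.
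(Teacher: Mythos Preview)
Your proposal is correct and follows essentially the same approach as the paper: assume the conclusion fails, boost $\mcA$ into an $\mcA'$ via sample-splitting and a $\topk$ vote, apply Hoeffding and Borel--Cantelli to get probability-$1$ identification on infinite draws, and derive a contradiction through the equivalences of Sections~\ref{sec:probabilistic-deterministic-equivalent}--\ref{sec:probabilistic-statistical-equivalent} (the paper packages these as Corollary~\ref{corollary:probabilistic-statistical}). The only real difference is the block parameterization---the paper uses $t/\log t$ blocks of size $\log t$, whereas you use $\lceil \log^2 t \rceil$ blocks of size $\approx t/\log^2 t$---and you make explicit the uniformity point that $\delta$ depends on $(L_z,\mcD)$ while $m(t)$ must not, which the paper handles implicitly.
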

\begin{proof}
    Assume for the sake of contradiction that there exists a $k$-list identifier $\mcA$ which satisfies that for every language $L_z \in \mcC$, and for every distribution $\mcD$ that is valid for $L_z$,
    \begin{align}
        \label{eqn:to-contradict}
        \limsup_{t \to \infty} \Pr_{x_1,\dots,x_t \sim \mcD^t}\left[\mcA(x_1,\dots,x_t) \not\di L_z \right] \le \frac{1}{k+1}-\eps
    \end{align}
    for some $\eps > 0$. This means that for every $L_z \in \mcC$, and every distribution $\mcD$ that is valid for $L_z$, there exists a finite $t_0=t_0(L_z, \mcD)$, such that for every $t \ge t_0$, 
    \begin{align}
        \label{eqn:prob-not-list-identify-small}
        \Pr_{x_1,\dots,x_t \sim \mcD^t}\left[\mcA(x_1,\dots,x_t) \not\di L_z \right] \le \frac{1}{k+1}-\frac{\eps}{2}.
    \end{align}
    Otherwise, if there were infinitely many $t$ for which $\Pr_{x_1,\dots,x_t \sim \mcD^t}\left[\mcA(x_1,\dots,x_t) \not\di L_z \right] > \frac{1}{k+1}-\frac{\eps}{2}$, then we would have $\limsup_{t \to \infty} \Pr_{x_1,\dots,x_t \sim \mcD^t}\left[\mcA(x_1,\dots,x_t) \not\di L_z \right] \ge \frac{1}{k+1}-\frac{\eps}{2}$, which would contradict \eqref{eqn:to-contradict}.

    So, with a view to contradict \Cref{corollary:probabilistic-statistical}, let us fix an arbitrary $L_z \in \mcC$, and an arbitrary $\mcD$ that is valid for $L_z$. We will use \eqref{eqn:prob-not-list-identify-small} to construct the following $k$-list identifier $\mcA'$: given $x_1,\dots,x_t \sim \mcD^t$, $\mcA'$ splits it into $M := \frac{t}{\log t}$ batches $B_1,\dots,B_M$, each of size $\log t$. It then runs $\mcA$ on each batch to obtain the lists $\mcA(B_1),\dots,\mcA(B_M)$. Then, it constructs $S$ to be the multiset of $\first_\mcC(l)$ values for every index $l$ in these lists, i.e.,
    \begin{align*}
        S := \mathrm{multiset}\left\{\first_\mcC(l) : l \in \mcA(B_j), 1 \le j \le M\right\}.
    \end{align*}
    Finally, $\mcA'$ outputs $\topk(S)$. These latter steps are similar to Steps (b) and (c) in \Cref{proc:probabilistic-to-deterministic}. In essence, we are boosting the guarantee in \eqref{eqn:prob-not-list-identify-small} by combining the predictions of $\mcA$ on independent batches of increasing size.

    Observe then that when $t \ge e^{t_0}$ for $t_0=t_0(L_z, \mcD)$ satisfying \eqref{eqn:prob-not-list-identify-small}, we will have that every batch $B_j$ is of size at least $t_0$, which implies that
    \begin{align*}
        \Pr_{B_j}\left[\mcA(B_j) \di L_z\right] \ge \frac{k}{k+1}+\frac{\eps}{2}.
    \end{align*}
   Fix any $t \ge e^{t_0}$, and let $Y_j = \Ind[\mcA(B_j) \di L_z]$. Since the batches are independent, we have that the $Y_j$s are independent, and also, $\E\left[\sum_{j=1}^M Y_j\right] \ge M\left(\frac{k}{k+1}+\frac{\eps}{2}\right)$. Then, we have that
   \begin{align*}
        \Pr_{x_1,\dots,x_t \sim \mcD^t}\left[\sum_{j=1}^M Y_j \le M\left(\frac{k}{k+1}+\frac{\eps}{4}\right)\right] &\le \Pr_{x_1,\dots,x_t \sim \mcD^t}\left[\sum_{j=1}^M Y_j \le \E\left[\sum_{j=1}^M Y_j\right] - \frac{M\eps}{4}\right] \\
        &\le e^{-M\eps^2/8}. \tag{Hoeffding's Inequality}
   \end{align*}
   Therefore, we have that with probability at least $1-e^{-M\eps^2/8}$, it holds that $\frac{1}{M}\sum_{j=1}^M Y_j > \frac{k}{k+1}+\frac{\eps}{4}$, which means that of the lists $\mcA(B_1),\dots,\mcA(B_M)$, strictly more than a $\frac{k}{k+1}+\frac{\eps}{4}$ fraction identify $L_z$. By a similar argument as that at the end of the proof of \Cref{thm:probabilistic-deterministic-equivalence}, this implies that $\topk(S)$ contains $\first_\mcC(z)$. Summarily, we have argued that for every $t \ge e^{t_0}$,
   \begin{align*}
        \Pr_{x_1,\dots,x_t \sim \mcD^t}\left[\mcA'(x_1,\dots,x_t) \not\di L_z\right] \le \Pr_{x_1,\dots,x_t \sim \mcD^t}\left[\first_\mcC(z) \not\in \mcA'(x_1,\dots,x_t)\right] \le e^{-M\eps^2/8} \le e^{-\frac{t\eps^2}{8\log t}}.
   \end{align*}
   Thus, we have that
   \begin{align*}
        \sum_{t \in \N} \Pr_{(x_i)_{i \in \N} \sim \mcD^\infty}\left[\mcA'(x_1,\dots,x_t) \not\di L_z\right] &=
        \sum_{t \in \N} \Pr_{x_1,\dots,x_t \sim \mcD^t}\left[\mcA'(x_1,\dots,x_t) \not\di L_z\right] \\
        &\le e^{t_0} + \sum_{t \ge e^{t_0}} \Pr_{x_1,\dots,x_t \sim \mcD^t}\left[\mcA'(x_1,\dots,x_t) \not\di L_z\right] \\
        &\le e^{t_0} + \sum_{t \ge e^{t_0}}e^{-\frac{t\eps^2}{8\log t}} < \infty.
   \end{align*}
   Then, by the Borel-Cantelli lemma, we have that
   \begin{align*}
        \Pr_{(x_i)_{i \in \N} \sim \mcD^\infty}\left[\exists i_1 < i_2 < i_3 < \dots : \forall j \in \N, \mcA'(x_1,x_2,\dots,x_{i_j}) \not\di L_z \right] = 0.
   \end{align*}
   Since the language $L_z$ and distribution $\mcD$ were arbitrarily chosen, $\mcA'$ satisfies this conclusion for every $L_z \in \mcC$ and every distribution $\mcD$ valid for $L_z$. But since $\mcC$ does not satisfy the $k$-Angluin condition \eqref{eqn:condition}, this contradicts \Cref{corollary:probabilistic-statistical}, concluding the proof.    
\end{proof}

\subsection{Condition Satisfied $\implies$ Exponential Rate}
\label{sec:exponential-rate}

We will now argue that any collection that satisfies the $k$-Angluin condition \eqref{eqn:condition} can be $k$-list identified at an exponential rate. While this can be shown by arguing that the list identification algorithm from \Cref{sec:ub} achieves exponential rates, we will provide a more direct argument that uses the structural property of a $k$-list identifiable collection, which allows it to be decomposed into $k$ identifiable collections.

\begin{theorem}[Expnential Rate]
    \label{thm:exponential-rate}
    Let $\mcC$ be a countable collection of languages that satisfies the $k$-Angluin condition \eqref{eqn:condition}. Then, $\mcC$ can be $k$-list identified at rate $R(t)=e^{-t}$.
\end{theorem}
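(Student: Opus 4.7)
The plan is to chain together two already-established results: the stratification theorem (Theorem \ref{thm:stratification}) and the exponential-rate identification result of \cite{kalavasis2025limits} for ordinary (single-guess) identification. Since $\mcC$ satisfies the $k$-Angluin condition, Theorem \ref{thm:k-angluin-condition-ub} tells us that $\mcC$ is $k$-list identifiable in the limit, and so Theorem \ref{thm:stratification} guarantees a decomposition $\mcC = \bigcup_{i=1}^{k} \mcC_i$ in which each sub-collection $\mcC_i$ is identifiable in the limit in the usual (single-guess) sense.

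The next step is to invoke the result of \cite{kalavasis2025limits}: every countable collection identifiable in the limit admits a deterministic identifier that achieves rate $R(t) = e^{-t}$ on every valid distribution. Applying this to each stratum, I get identifiers $\mcA_1,\dots,\mcA_k$, where $\mcA_i$ targets $\mcC_i$ and satisfies, for every $L \in \mcC_i$ and every distribution $\mcD$ valid for $L$, a bound of the form
\[
\Pr_{x_1,\dots,x_t \sim \mcD^t}\bigl[\mcA_i(x_1,\dots,x_t) \neq \text{(index of $L$)}\bigr] \;\le\; c_1(L,\mcD,\mcC_i)\cdot e^{-c_2(L,\mcD,\mcC_i)\, t}.
\]
Concatenate these into a single $k$-list identifier
\[
\mcA(x_1,\dots,x_t) := \bigl(\mcA_1(x_1,\dots,x_t),\,\mcA_2(x_1,\dots,x_t),\,\dots,\,\mcA_k(x_1,\dots,x_t)\bigr).
\]

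To finish, fix any target $L_z \in \mcC$ and any distribution $\mcD$ valid for $L_z$. By the decomposition, there is some index $i^\star$ with $L_z \in \mcC_{i^\star}$; the rate guarantee for $\mcA_{i^\star}$ applies directly. The event $\mcA(x_1,\dots,x_t) \not\di L_z$ forces, in particular, $\mcA_{i^\star}(x_1,\dots,x_t)$ to disagree with the identity of $L_z$, which by the bound above occurs with probability at most $c_1 \cdot e^{-c_2 t}$ for the constants attached to $(L_z,\mcD,\mcC_{i^\star})$. This is exactly the definition of being $k$-list identifiable at rate $R(t)=e^{-t}$ according to Definition \ref{def:list-identification-rates}.

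There is no real obstacle here beyond bookkeeping: the only subtlety is that the identifiers $\mcA_i$ for $i \neq i^\star$ may output arbitrary or meaningless indices when run on a distribution valid for a language outside $\mcC_i$, but since we only need one of the $k$ list entries to be correct, these ``incorrect'' components are harmless. This is precisely why access to a list of size $k$ lets us exploit the stratification: each stratum carries the full single-guess exponential rate, and we simply run them in parallel.
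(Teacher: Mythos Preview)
Your proposal is correct and follows essentially the same approach as the paper: invoke Theorem~\ref{thm:k-angluin-condition-ub} and then Theorem~\ref{thm:stratification} to decompose $\mcC$ into $k$ identifiable sub-collections, apply the exponential-rate result of \cite{kalavasis2025limits} to each, and concatenate the resulting identifiers. Your additional remarks about the irrelevant outputs of $\mcA_i$ for $i \neq i^\star$ are a nice bit of bookkeeping that the paper leaves implicit.
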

\begin{proof}
    Since $\mcC$ satisfies the $k$-Angluin condition, \Cref{thm:k-angluin-condition-ub} ensures that it is $k$-list identifiable in the limit. Then, from \Cref{thm:stratification}, we further know that $\mcC$ may be expressed as $\mcC = \cup_{i=1}^k\mcC_i$, where each $\mcC_i$ is identifiable in the limit, meaning that it satisfies Angluin's condition for identification (the base predicate given in \eqref{eqn:base-predicate}). From Proposition 3.10 in \cite{kalavasis2025limits}, we then know that there exists an identifier $\mcA_i$ for every $\mcC_i$ that identifies languages in $\mcC_i$ at an exponential rate. This immediately implies that the $k$-list identifier $\mcA$, which concatenates the outputs of $\mcA_1,\dots,\mcA_k$ that are run on the collections $\mcC_1,\dots,\mcC_k$ respectively, achieves an exponential rate for $k$-list identification of $\mcC$. 
\end{proof}

\subsection{Exponential Rate Best Possible}
\label{sec:exponential-rate-best-possible}

We will now show that an exponential rate is effectively the best rate possible for list identification, upto a technical triviality condition. This technical condition captures the following triviality: if a collection $\mcC$ satisfies that for every $k+1$ distinct languages $L_{i_1},\dots,L_{i_{k+1}}$ in the collection, $\cap_{j=1}^{k+1} L_{i_j}=\emptyset$, then observe that any single string from the target language $L_z$ pins down a list of at most $k$ distinct languages that must necessarily identify $L_z$. That is, collections satisfying this property can be list-identified right from $t=1$ (i.e., at a zero failure probability rate). So, adopting terminology from \cite{kalavasis2025limits}, we will say that a collection is \textit{non-trivial} for $k$-list identification if there exist $k+1$ distinct languages $L_{i_1},\dots,L_{i_{k+1}}$ in $\mcC$ which satisfy $\cap_{j=1}^{k+1} L_{i_j} \neq \emptyset$. The next theorem shows that a collection that is non-trivial for $k$-list identification cannot be $k$-list identified at a rate that is faster than exponential.

\begin{theorem}[Exponential Rate Best Possible]
    \label{thm:exponential-rate-best-possible}
    Let $\mcC$ be a countable collection of languages that is non-trivial for $k$-list identification. Then, for every $k$-list identifier $\mcA$, there exists a language $L_z \in \mcC$ and a distribution $\mcD$ valid for $L_z$ such that $\Pr_{x_1,\dots,x_t \sim \mcD^t}[\mcA(x_1,\dots,x_t) \not\di L_z] \ge e^{-2t}$ for infinitely many $t \in \N$.
\end{theorem}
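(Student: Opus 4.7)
By non-triviality, fix $k+1$ distinct languages $L_{i_1},\dots,L_{i_{k+1}}$ in $\mcC$ whose intersection contains some string $x^\star$. The plan is to exhibit, for any $k$-list identifier $\mcA$, one of these languages as the ``bad'' $L_z$ by considering distributions that put mass $1/2$ on $x^\star$. Concretely, for each $j \in \{1,\dots,k+1\}$, construct a distribution $\mcD_j$ valid for $L_{i_j}$ with $\mcD_j(x^\star)=1/2$ and the remaining mass distributed arbitrarily but with full support on $L_{i_j} \setminus \{x^\star\}$ (e.g.\ with geometrically decreasing weights in any fixed enumeration of $L_{i_j}\setminus\{x^\star\}$, so that $\mcD_j$ is valid).

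For each $t$, let $\sigma_t = (x^\star,\dots,x^\star)$ denote the all-$x^\star$ input of length $t$. Under any $\mcD_j$, $\Pr[(x_1,\dots,x_t)=\sigma_t]=2^{-t}$. Crucially, the list $\mcA(\sigma_t)$ is a \emph{single} deterministic list of size at most $k$ that does not depend on $j$. Since there are $k+1$ distinct candidate languages $L_{i_1},\dots,L_{i_{k+1}}$ but at most $k$ of them can appear (up to equivalence) in $\mcA(\sigma_t)$, there exists an index $j^\star(t)\in\{1,\dots,k+1\}$ with $L_{i_{j^\star(t)}} \not\id \mcA(\sigma_t)$. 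By pigeonhole on $\{1,\dots,k+1\}$, some value $j^\star \in \{1,\dots,k+1\}$ equals $j^\star(t)$ for infinitely many $t$; set $L_z := L_{i_{j^\star}}$ and $\mcD := \mcD_{j^\star}$.

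For every $t$ in this infinite set, the event $(x_1,\dots,x_t)=\sigma_t$ occurs with probability exactly $2^{-t}$, and on this event $\mcA(x_1,\dots,x_t)=\mcA(\sigma_t) \not\di L_z$. Hence
\begin{align*}
    \Pr_{x_1,\dots,x_t\sim\mcD^t}\bigl[\mcA(x_1,\dots,x_t)\not\di L_z\bigr] \;\ge\; 2^{-t} \;=\; e^{-t\ln 2} \;\ge\; e^{-2t},
\end{align*}
where the last inequality uses $\ln 2 < 2$. This holds for infinitely many $t\in\N$, giving the desired lower bound.

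The argument is fairly direct; the only real subtlety is the pigeonhole step needed to commit to a \emph{single} pair $(L_z,\mcD)$ that witnesses the failure for infinitely many $t$, rather than a $t$-dependent choice of bad language. The exponential-versus-exponential gap $2^{-t}$ vs.\ $e^{-2t}$ is comfortable because $\ln 2 < 2$; one could in fact sharpen the constant in the exponent, but $e^{-2t}$ suffices for the stated theorem.
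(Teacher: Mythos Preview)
Your proof is correct and essentially identical to the paper's: both fix $k+1$ distinct languages sharing a string $x^\star$, put mass (at least) $1/2$ on $x^\star$, observe that on the all-$x^\star$ input the identifier's $k$-list must miss one of the $k+1$ languages, and pigeonhole over $t$ to pin down a single bad pair $(L_z,\mcD)$ that works for infinitely many $t$. The only nit is that if some $L_{i_j}=\{x^\star\}$ is a singleton then your stated $\mcD_j(x^\star)=1/2$ is not a valid distribution; the paper writes ``mass at least $1/2$'' to sidestep this, and the bound only improves in that case.
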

\begin{proof}
    Given that $\mcC$ is non-trivial for $k$-list identification, there exist $k+1$ distinct languages $L_{i_1},\dots,L_{i_{k+1}}$ in $\mcC$ that all share some string $x$. For each $j$, let $\mcD_{i_j}$ be the distribution that assigns mass at least $1/2$ to $x$, and distributes the rest of the mass among the rest of the strings in $L_{i_j}$ arbitrarily, in a way that every string gets assigned a positive mass. Then, for every $\mcD_{i_j}$, for any $t \in \N$, there is at least a $1/2^t$ chance that the first $t$ strings $x_1,\dots,x_t \sim \mcD_{i_j}^t$ are all equal to $x$. Denoting this event by $\mcE_t$, we have that for every $j$, $\Pr_{x_1,\dots,x_t \sim \mcD_{i_j}^t}[\mcE_t] \ge \frac{1}{2^t}$. Observe now that conditioned on $\mcE_t$, because $\mcA(x_1,\dots,x_t)$ has size at most $k$,
    \begin{align*}
        \sum_{j=1}^{k+1}\Ind[\mcA(x_1,\dots,x_t) \di L_{i_j}] \le k,\footnotemark
    \end{align*}
    \footnotetext{If $\mcA$ is allowed to be randomized, we can instantiate this inequality as $\sum_{j=1}^{k+1}\Pr[\mcA(x_1,\dots,x_t) \di L_{i_j} | \mcE_t]=\E\left[\sum_{j=1}^{k+1}\Ind[\mcA(x_1,\dots,x_t) \di L_{i_j}] ~\big|~ \mcE_t\right] \le k$, where the probability is only over the randomness in $\mcA$.}which means that for some $j$, $\Ind[\mcA(x_1,\dots,x_t) \not\di L_{i_j}] = 1$. By the pigeonhole principle, it holds that for some $j$, $\Ind[\mcA(x_1,\dots,x_t) \not\di L_{i_j}] = 1$ conditioned on $\mcE_t$, for infinitely many $t \in \N$. We can therefore conclude that for such a $j$, for infinitely many $t$,
    \begin{align*}
        \Pr_{x_1,\dots,x_t \sim \mcD_{i_j}^t}[\mcA(x_1,\dots,x_t) \not\di L_{i_j})] &\ge \Pr_{x_1,\dots,x_t \sim \mcD_{i_j}^t}[\mcE_t] \cdot \Pr_{x_1,\dots,x_t \sim \mcD_{i_j}^t}[\mcA(x_1,\dots,x_t) \not\di L_{i_j}) ~|~ \mcE_t] \\
        &\ge 2^{-t} \ge e^{-2t}.
    \end{align*}
\end{proof}

\section*{Acknowledgements}
This work was supported by Moses Charikar's and Gregory Valiant's Simons Investigator Awards, and a Google PhD Fellowship.

\bibliographystyle{alpha} 
\bibliography{references}

\end{document}